\documentclass{article}
\usepackage[dvips]{graphicx}
\usepackage{amssymb,amsmath,color}
\usepackage{url}

\oddsidemargin .25in    
\evensidemargin .25in \marginparwidth 0.07 true in
\topmargin -0.5in \addtolength{\headsep}{0.25in}
\textheight 8.5 true in       
\textwidth 6.0 true in        
\widowpenalty=10000 \clubpenalty=10000

\parindent 0pt
\topsep 4pt plus 1pt minus 2pt
\partopsep 1pt plus 0.5pt minus 0.5pt
\itemsep 2pt plus 1pt minus 0.5pt
\parsep 2pt plus 1pt minus 0.5pt
\parskip .5pc

\title{Structured sparsity-inducing norms \\ through submodular functions}

\author{
Francis Bach  \\
INRIA - Willow project-team\\
Laboratoire d'Informatique de l'Ecole Normale Sup\'erieure \\
Paris, France \\
\texttt{francis.bach@ens.fr} }

%

\newcommand{\BEAS}{\begin{eqnarray*}}
\newcommand{\EEAS}{\end{eqnarray*}}
\newcommand{\BEA}{\begin{eqnarray}}
\newcommand{\EEA}{\end{eqnarray}}
\newcommand{\BEQ}{\begin{equation}}
\newcommand{\EEQ}{\end{equation}}
\newcommand{\BIT}{\begin{itemize}}
\newcommand{\EIT}{\end{itemize}}
\newcommand{\BNUM}{\begin{enumerate}}
\newcommand{\ENUM}{\end{enumerate}}
\newcommand{\BA}{\begin{array}}
\newcommand{\EA}{\end{array}}

\newcommand{\tr}{\mathop{ \rm tr}}

\newcommand{\idm}{I}
\newcommand{\rb}{\mathbb{R}}
\newcommand{\BlackBox}{\rule{1.5ex}{1.5ex}}  
\newcommand{\lova}{Lov\'asz }

\newenvironment{proof}{\par\noindent{\bf Proof\ }}{\hfill\BlackBox\\[2mm]}
\newtheorem{lemma}{Lemma}

\newtheorem{proposition}{Proposition}

\newcommand{\mysec}[1]{Section~\ref{sec:#1}}
\newcommand{\eq}[1]{Eq.~(\ref{eq:#1})}
\newcommand{\myfig}[1]{Figure~\ref{fig:#1}}

\def \supp{ { \rm Supp }}

\begin{document}

\maketitle

\begin{abstract}
Sparse methods for supervised learning aim at finding good linear predictors from as few variables as possible, i.e., with small cardinality of their supports. This combinatorial selection problem is often turned into a convex optimization problem by replacing the cardinality function by its convex envelope (tightest convex lower bound), in this case the $\ell_1$-norm. In this paper, we investigate more general set-functions than the cardinality, that may incorporate prior knowledge or structural constraints which are common in many applications: namely, we show that for nondecreasing submodular set-functions, the corresponding convex envelope can be obtained from its \lova extension, a common tool in submodular analysis. This defines a family of polyhedral norms, for which we provide generic algorithmic tools (subgradients and proximal operators) and theoretical results (conditions for support recovery or high-dimensional inference). By selecting specific submodular functions,  we can give a new interpretation to known norms, such as those based on rank-statistics or grouped norms with potentially overlapping groups; we also define new norms, in particular ones that can be used as non-factorial priors for supervised learning.\end{abstract}

\section{Introduction}

The concept of parsimony is central in many scientific domains. In the context of statistics, signal processing or machine learning, it takes the form of variable or feature selection problems, and is commonly used in two situations: First,  to make the model or the prediction more interpretable or cheaper to use, i.e., even if
the underlying problem does not admit sparse solutions, one looks for the best sparse approximation. Second, sparsity  can also be used given prior knowledge that the model should be sparse. In these two situations, reducing parsimony to finding models with low cardinality turns out to be limiting, and structured parsimony has emerged as a fruitful practical extension, with applications to image processing, text processing or bioinformatics (see, e.g.,~\cite{cap,jenatton2009structured,huang2009learning,LaurentGuillaumeGroupLasso,kim,jenattonmairal,Mairal10aNIPS} and \mysec{examples}). For example, in~\cite{LaurentGuillaumeGroupLasso}, structured sparsity is used to encode prior knowledge regarding network relationship between genes, while in~\cite{jenattonmairal}, it is used as an alternative to structured non-parametric Bayesian process based priors for topic models.

Most of the work based on convex optimization and the design of dedicated sparsity-inducing norms has focused mainly on the specific allowed set of sparsity patterns~\cite{cap,jenatton2009structured,LaurentGuillaumeGroupLasso,jenattonmairal}: if $w \in \rb^p$ denotes the predictor we aim to estimate, and $\supp(w)$ denotes its support, then these norms are designed so that penalizing with these norms only leads to supports from a  given family of allowed patterns. In this paper, we instead follow the approach of~\cite{haupt2006signal,huang2009learning} and consider specific penalty functions $F(\supp(w))$ of the support set, which go beyond the cardinality function, but are not limited or designed to only forbid certain sparsity patterns. As shown in \mysec{patterns}, these may also lead to restricted sets of supports but their interpretation in terms of an \emph{explicit} penalty on the support leads to additional insights into the behavior of structured sparsity-inducing norms (see, e.g., \mysec{overlapping}). While direct greedy approaches (i.e., forward selection) to the problem are considered in \cite{haupt2006signal,huang2009learning}, we provide convex relaxations to the function $w \mapsto  F(\supp(w))$, which extend the traditional link between the $\ell_1$-norm and the cardinality function.

This is done for a particular ensemble of set-functions $F$, namely \emph{nondecreasing submodular functions}. Submodular functions may be seen as the set-function equivalent of convex functions, and exhibit many interesting properties that we review in \mysec{submodular}---see~\cite{submodular_tutorial} for a tutorial on submodular analysis and \cite{krause2005near,kawahara22submodularity} for other applications to machine learning.  This paper makes the following contributions:

\hspace*{.25cm} $-$   We make explicit links between submodularity and sparsity by showing that the convex envelope of the function $w \mapsto  F(\supp(w))$ on the $\ell_\infty$-ball may be readily obtained from  the \lova extension of the submodular function (\mysec{norm}). 

\hspace*{.25cm} $-$ We provide generic algorithmic tools, i.e., subgradients and proximal operators (\mysec{prox}), as well as theoretical guarantees, i.e., conditions for support recovery or high-dimensional inference (\mysec{analysis}), that extend classical results for the $\ell_1$-norm and show that many norms may be tackled by the exact same analysis and algorithms.

\hspace*{.25cm} $-$  By selecting specific submodular functions in \mysec{examples},  we recover and give a new interpretation to known norms, such as those based on rank-statistics or grouped norms with potentially overlapping groups~\cite{cap,jenatton2009structured,Mairal10aNIPS}, and we define new norms, in particular ones that can be used as non-factorial priors for supervised learning (\mysec{examples}). These are illustrated on simulation experiments in \mysec{simulations}, where they outperform related greedy approaches~\cite{huang2009learning}.

\textbf{Notation.} \hspace*{.15cm}
For $w \in \rb^p$, $\supp(w) \subset V = \{1,\dots,p\}$ denotes the support of $w$, defined as $\supp(w) = \{ j \in V, \ w_j \neq 0 \}$. 
For $w \in \rb^p$ and $q \in [1,\infty]$, we  denote by $\| w\|_q$ the $\ell_q$-norm
of $w$. We denote by $|w| \in \rb^p$ the vector of absolute values of the components of $w$.
Moreover, given a vector $w$ and a matrix~$Q$, $w_A$ and $Q_{AA}$ are the corresponding subvector and submatrix of $w$ and $Q$. Finally, for $w \in \rb^p$ and $A \subset V$, $w(A)=\sum_{k \in A} w_k$ (this defines a modular set-function).

\section{Review of submodular function theory}

\label{sec:submodular}
\label{sec:review}
Throughout this paper, we consider a \emph{nondecreasing submodular} function $F$ defined on the power set $2^V$ of $V = \{1,\dots,p\}$, i.e., such that:
 \BEAS
 \forall A,B \subset V, & &   F(A) + F(B) \geqslant F(A \cup B) + F(A \cap B), \hspace*{.2cm} \mbox{ (submodularity)} \\
 \forall A,B \subset V, & &   A \subset B \Rightarrow F(A) \leqslant F(B). \hspace*{2.547cm} \mbox{ (monotonicity)}
  \EEAS
 Moreover, we assume (without loss of generality) that $F(\varnothing)=0$. These set-functions are often referred to as \emph{polymatroid set-functions}~\cite{fujishige2005submodular} or \emph{$\beta$-functions}~\cite{edmonds}. Also, without loss of generality, we may assume that $F$ is strictly positive on singletons, i.e., for all $k\in V$, $F(\{ k\})>0$. Indeed, if $F(\{k\})=0$, then by submodularity and monotonicity, if $A \ni k$, $F(A) = F(A \backslash \{k\})$ and thus we can simply consider $V \backslash \{k\}$ instead of $V$.

Classical examples are the cardinality function (which will lead to the $\ell_1$-norm) and, given a partition of $V$ into $B_1 \cup \cdots \cup B_k=V$, the set function $A\mapsto F(A)$ which is equal to 
the number of groups $B_1,\dots,B_k$ with non empty intersection with $A$ (which will lead to the grouped $\ell_1$/$\ell_\infty$-norm~\cite{cap, negahban2008joint}).
 
 \textbf{\lova extension.} \hspace*{.15cm}
 Given any set-function $F$, one can define its \emph{\lova extension}~\cite{lovasz1982submodular} (a.k.a.~\emph{Choquet integral}~\cite{choquet1953theory}) $f: \rb_+^p \to \rb$, as follows: given $w \in \rb_+^p$, we can order the components of $w$ in decreasing order $w_{j_1} \geqslant \dots \geqslant w_{j_p} \geqslant 0$; the value $f(w)$ is then defined as:
 \BEQ
 f(w) 
\label{eq:lovasz2}
   =    \sum_{k=1}^{p} w_{j_k} [ F( \{ j_1,\dots,j_k\} ) - F( \{ j_1,\dots,j_{k-1}\} ) ] .
 \EEQ
 Note that if some of the components of $w$ are equal, all orderings lead to the same value of $f(w)$.
 The  \lova extension $f$ is always piecewise-linear, and when $F$ is submodular, it is also convex (see, e.g.,~\cite{lovasz1982submodular,fujishige2005submodular}). Moreover,  for all $\delta \in \{0,1\}^p$, $f(\delta) = F(\supp(\delta))$: $f$ is indeed an extension from  vectors in $\{0,1\}^p$ (which can be identified with indicator vectors of sets) to all vectors in $\rb_+^p$. Moreover,  it turns out that minimizing $F$ over subsets, i.e., minimizing $f$ over $\{0,1\}^p$ is equivalent to minimizing $f$ over $[0,1]^p$~\cite{lovasz1982submodular,edmonds}.
 
\textbf{Submodular polyhedron and greedy algorithm.}  \hspace*{.15cm}
We denote by $\mathcal{P}$ the \emph{submodular polyhedron}~\cite{fujishige2005submodular}, defined as  the set of $s \in \rb_+^p$ 
such that for all $A \subset V$, $s(A) \leqslant F(A)$, i.e., 
$
\mathcal{P} = \{ s \in \rb_+^p, \ \forall A \subset V, \  s(A)\leqslant F(A) \},
$
where we use the notation $s(A) = \sum_{k \in A} s_k$. One important result in submodular analysis is that if $F$ is a nondecreasing submodular function, then we have a representation of $f$ as a maximum of linear functions~\cite{fujishige2005submodular,lovasz1982submodular}, i.e.,  for all $w \in \rb_+^p$, 
\BEQ
\label{eq:poly}
f(w) = \max_{ s \in \mathcal{P}} \  w^\top s.
\EEQ
Instead of solving a linear program with $p+2^p$ contraints, a solution $s$ may be obtained by the following ``greedy algorithm'':
order the components of $w$ in decreasing order $w_{j_1} \geqslant \dots \geqslant w_{j_p}$, and then take for all $k \in   \{1,\dots,p\}$,
$s_{j_k} = F( \{ j_1,\dots,j_k\} ) - F( \{ j_1,\dots,j_{k-1}\} ) .$

\textbf{Stable sets.}\hspace*{.15cm}
 A set $A$ is said \emph{stable} if it cannot be augmented without increasing $F$, i.e.,  if  for all sets $B \supset A$, $B \neq A \Rightarrow F(B) > F(A)$. If $F$ is strictly increasing, then all sets are stable. Stable sets are also sometimes referred to as \emph{flat} or \emph{closed}~\cite{edmonds}. The set   of stable sets is closed by intersection~\cite{edmonds}, and will correspond to the set of allowed sparsity patterns (see \mysec{patterns}). For the cardinality function, all sets are stable.

\textbf{Separable sets.}\hspace*{.15cm}
 A set $A$ is separable if we can find a partition of $A$ into $ A = B_1 \cup \cdots \cup B_k$ such that $F(A) = F(B_1)+\cdots+F(B_k)$. A set $A$ is inseparable if it is not separable. As shown in~\cite{edmonds}, the submodular polytope $\mathcal{P}$ has full dimension $p$ as soon as $F$ is strictly positive on all singletons, and its faces are exactly the sets $\{ s_k=0 \}$ for $k \in V$ and $\{ s(A) = F(A) \}$ for stable \emph{and} inseparable sets. We let denote $\mathcal{T}$ the set of such sets. This implies that $\mathcal{P} = \{ s \in \rb_+^p, \ \forall A \in \mathcal{T}, s(A) \leqslant F(A) \}$. These stable inseparable sets will play a role when describing extreme points of unit balls of our new norms (\mysec{norm}) and for deriving concentration inequalities in \mysec{highdim}. For the cardinality function, stable and inseparable sets are singletons.

\paragraph{Submodular function minimization.}
Submodular functions are particularly interesting because they can be minimized in polynomial time. In this paragraph, we consider a non-monotonic submodular function $G$ (otherwise finding the minimum is trivial). Most algorithms for minimizing submodular functions rely on the following strong duality principle~\cite{edmonds,fujishige2005submodular}:
\BEQ
\label{eq:mini}
\min_{ A \subset V} G(A) = \max_{s \in \mathcal{B}(G)}  \sum_{k \in V} \min \{0,s_k \},
\EEQ
where $\mathcal{B}(G) = \{ s \in \rb^p, \forall A \subset V, s(A) \leqslant G(A), s(V) = G(V) \}$ is referred to as the \emph{base polyhedron}. Moreover, algorithms for minimizing $G$ will usually output $A$  and $s$ such that $G(A) = \sum_{k \in V} \min \{ 0,s_k\}$ as a certificate for optimality. The two main types of algorithms are combinatorial algorithms (that explicitly looks for $A$) and ones based on convex optimization (that explicitly looks for $s$). The first type of algorithm leads to strongly polynomial algorithms with best known complexity $O(p^6)$~\cite{orlin2009faster}, while the minimum point algorithm of~\cite{fujishige2005submodular} has no worst-time complexity bounds but is usually much faster in practice~\cite{fujishige2005submodular} and is based on the equivalent problem of finding the minimum-norm point in $\mathcal{B}(G)$, i.e., $\min_{s \in \mathcal{B}(G)  }   \| s\|_2^2
$. Note that in this case, the minimum point algorithm also outputs a particular $s$ solution of    \eq{mini}---which has several solutions in general.

\section{Definition and properties of structured norms}

\label{sec:norm}
We define the function $\Omega(w) = f(|w|)$, where $|w|$ is the vector in $\rb^p$ composed of absolute values of~$w$ and $f$ the \lova extension of $F$. We have the following properties (see proof in the appendix), which show that we indeed define a norm and that it is the desired convex envelope:
\begin{proposition}[Convex envelope, dual norm] 
\label{prop:envelope}
Assume that the set-function $F$ is submodular, nondecreasing, and strictly positive for all singletons. Define  
$\Omega: w \mapsto f(|w|)$. Then:

(i) $\Omega$ is a norm on $\rb^p$,

(ii) $\Omega$ is the convex envelope of the function $g: w \mapsto F( \supp(w))$ on the unit $\ell_\infty$-ball,

(iii) the dual norm (see, e.g.,~\cite{boyd}) of $\Omega $ is equal to $  \Omega^\ast(s) = \max_{A \subset V } \frac{ \| s_A \|_1 }{F(A)} 
 = \max_{A \in \mathcal{T} } \frac{ \| s_A \|_1 }{F(A)} $.
\end{proposition}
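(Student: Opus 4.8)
The plan is to prove the three claims largely separately, but to establish (i) and the dual-norm formula (iii) first and then deduce the convex-envelope statement (ii) from (iii); this reordering makes (ii) cleaner. Throughout I would lean on the two facts recalled above: the max representation $f(w)=\max_{s\in\mathcal{P}}w^\top s$ of \eq{poly}, and the extension property $f(1_A)=F(A)$.

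For (i), I would first record three properties of the \lova extension $f$ on $\rb_+^p$. (a) Positive homogeneity is immediate from the ordered-sum definition \eq{lovasz2}. (b) Monotonicity holds because every $s\in\mathcal{P}$ is nonnegative, so $0\le w\le w'$ gives $w^\top s\le w'^\top s$ and hence $f(w)\le f(w')$ via \eq{poly}. (c) Subadditivity on $\rb_+^p$ follows from convexity (recalled above) together with positive homogeneity, since $f(u+v)=2f(\tfrac12(u+v))\le f(u)+f(v)$. I then transfer these to $\Omega(w)=f(|w|)$: homogeneity because $|\lambda w|=|\lambda|\,|w|$; the triangle inequality from the componentwise bound $|w+w'|\le|w|+|w'|$, monotonicity, and subadditivity, giving $\Omega(w+w')\le f(|w|+|w'|)\le\Omega(w)+\Omega(w')$; and positive definiteness because for $w\neq0$ the largest entry of $|w|$ contributes the term $|w|_{j_1}F(\{j_1\})>0$ in \eq{lovasz2}, using strict positivity on singletons.

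For (iii), write $\rho(s)=\max_{A}\|s_A\|_1/F(A)$. The lower bound $\Omega^\ast(s)\ge\rho(s)$ comes from the admissible test vector supported on a maximizing $A$, with entries $\sign(s_k)/F(A)$: it satisfies $\Omega(w)=f(1_A)/F(A)=1$ and $s^\top w=\|s_A\|_1/F(A)$. The matching upper bound is where submodular structure enters: since $(|s|/\rho)(A)\le F(A)$ for every $A$ by definition of $\rho$, the vector $|s|/\rho$ lies in the submodular polyhedron $\mathcal{P}$, so by \eq{poly} one has $s^\top w\le|s|^\top|w|=\rho\,(|s|/\rho)^\top|w|\le\rho\,f(|w|)=\rho\,\Omega(w)$, giving $\Omega^\ast(s)\le\rho(s)$. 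Finally, the reduction from all $A$ to $A\in\mathcal{T}$ is exactly the reduced description $\mathcal{P}=\{s\in\rb_+^p:\ s(A)\le F(A)\ \forall A\in\mathcal{T}\}$ from \mysec{review}: if $|s|/\rho'$ (with $\rho'$ the maximum over $\mathcal{T}$) meets the constraints for $A\in\mathcal{T}$, it lies in $\mathcal{P}$ and hence meets them for all $A$, forcing $\rho=\rho'$.

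For (ii), the convex envelope of $g$ on the $\ell_\infty$-ball is the Fenchel biconjugate $h^{\ast\ast}$ of the function $h$ equal to $g$ on $\{\|w\|_\infty\le1\}$ and $+\infty$ elsewhere. Maximizing $s^\top w$ over $\|w\|_\infty\le1$ with a fixed support gives $h^\ast(s)=\max_{A\subset V}(\|s_A\|_1-F(A))$. Two observations then close the argument. First, $h^\ast(s)\ge0$ always (take $A=\varnothing$), and by (iii) $h^\ast(s)=0$ exactly when $\Omega^\ast(s)\le1$; restricting the supremum defining $h^{\ast\ast}$ to that set yields $h^{\ast\ast}(w)\ge\sup_{\Omega^\ast(s)\le1}s^\top w=\Omega(w)$. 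Second, for the reverse inequality on the ball I would prove the pointwise estimate $s^\top w\le\Omega(w)+h^\ast(s)$; since $s^\top w\le|s|^\top|w|$ while $\Omega$ and $h^\ast$ depend only on $|w|$ and $|s|$, it suffices to treat $s,w\ge0$, and Abel summation against the greedy ordering of \eq{lovasz2} gives the identity $|s|^\top|w|-f(|w|)=\sum_k(|w|_{j_k}-|w|_{j_{k+1}})\big(|s|(U_k)-F(U_k)\big)$ with $U_k=\{j_1,\dots,j_k\}$; bounding each bracket by $h^\ast(s)$ and the nonnegative coefficients by their sum $\|w\|_\infty\le1$ yields the estimate, hence $h^{\ast\ast}=\Omega$ on the ball. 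I expect this Abel-summation bound—which is precisely what upgrades ``$\Omega$ is a convex minorant'' to ``$\Omega$ is the \emph{largest} one''—to be the main obstacle; the norm axioms and the easy directions of the two bounds are routine.
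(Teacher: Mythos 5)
Your proposal is correct, and for part (ii) it takes a genuinely different route from the paper's. Parts (i) and (iii) are essentially the paper's own arguments: the paper gets (i) from positive homogeneity, convexity via \eq{poly}, and the bound $\Omega(w) \geqslant \min_{k} F(\{k\}) \| w\|_\infty$ (your explicit monotonicity/subadditivity steps just spell this out), and it gets (iii) by rewriting the constraint $|s| \in \mathcal{P}$ as $\max_{A \subset V} \| s_A\|_1 / F(A) \leqslant 1$ inside the max representation of $\Omega$ --- your two-sided bound with explicit test vectors and polyhedron membership is the same content made explicit, including the reduction to $\mathcal{T}$ via the reduced description of $\mathcal{P}$. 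The divergence is in (ii). The paper computes the biconjugate directly: it evaluates $g^\ast(s) = \max_{\delta \in \{0,1\}^p} \delta^\top |s| - f(\delta)$, invokes the Lov\'asz relaxation theorem (minimizing the submodular function $A \mapsto F(A) - |s|(A)$ over $\{0,1\}^p$ equals minimizing its extension over $[0,1]^p$) to pass to the continuous cube, and then computes $g^{\ast\ast}$ by a minimax swap justified by strong duality, ending with $\min_{\delta \in [0,1]^p,\, \delta \geqslant |w|} f(\delta) = f(|w|)$ by monotonicity of $f$. You instead prove (iii) first and sandwich the biconjugate: the lower bound by restricting the supremum to $\{ \Omega^\ast(s) \leqslant 1\}$, where $h^\ast$ vanishes, and the upper bound by the pointwise estimate $s^\top w \leqslant \Omega(w) + h^\ast(s)$ on the ball, proved by Abel summation against the greedy ordering (your identity and the bounding of each bracket by $h^\ast(s)$ and of the telescoping coefficients by $\|w\|_\infty \leqslant 1$ check out). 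This trades the paper's two pieces of machinery --- relaxation exactness and the minimax swap --- for an elementary summation-by-parts identity; notably, your upper bound uses no submodularity at all (it is valid for any set-function with $F(\varnothing)=0$), so submodularity enters only through \eq{poly} in part (iii) and the lower bound, which makes its logical role arguably more transparent, at the price of making (ii) depend on (iii) (harmless, since your proof of (iii) does not use (ii)). Both you and the paper identify the envelope with the Fenchel biconjugate of $g$ restricted to the $\ell_\infty$-ball, so the frameworks agree; the only cosmetic caveat is that your test vector in (iii) should take $A \subset {\rm Supp}(s)$ (or note that discarding zero coordinates of $s_A$ only helps, by monotonicity of $F$), a one-line fix.
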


\begin{figure}
\begin{center}

\hspace*{-.5cm}
\includegraphics[scale=.32]{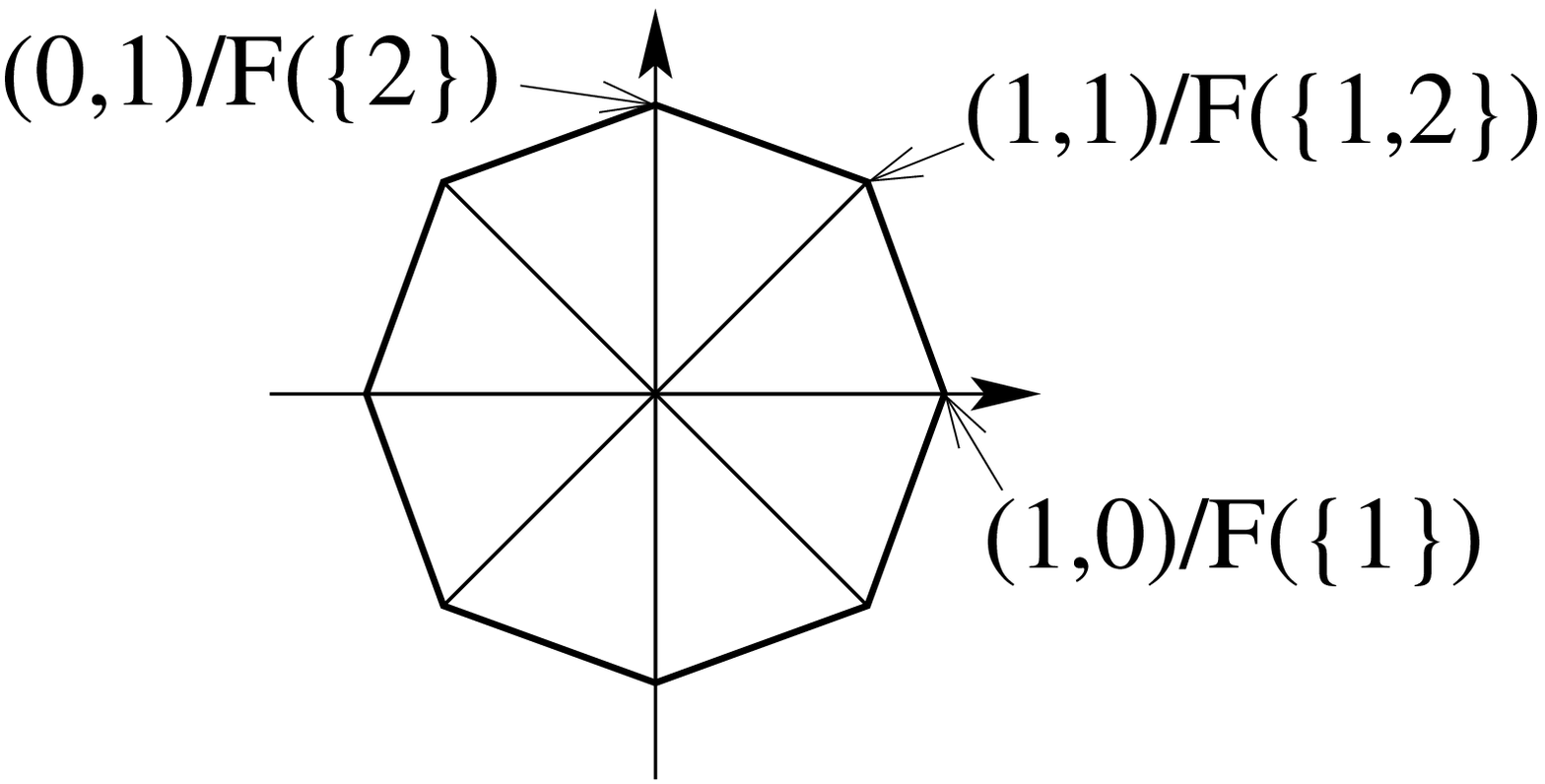} 
\hspace*{.5cm}
\includegraphics[scale=.32]{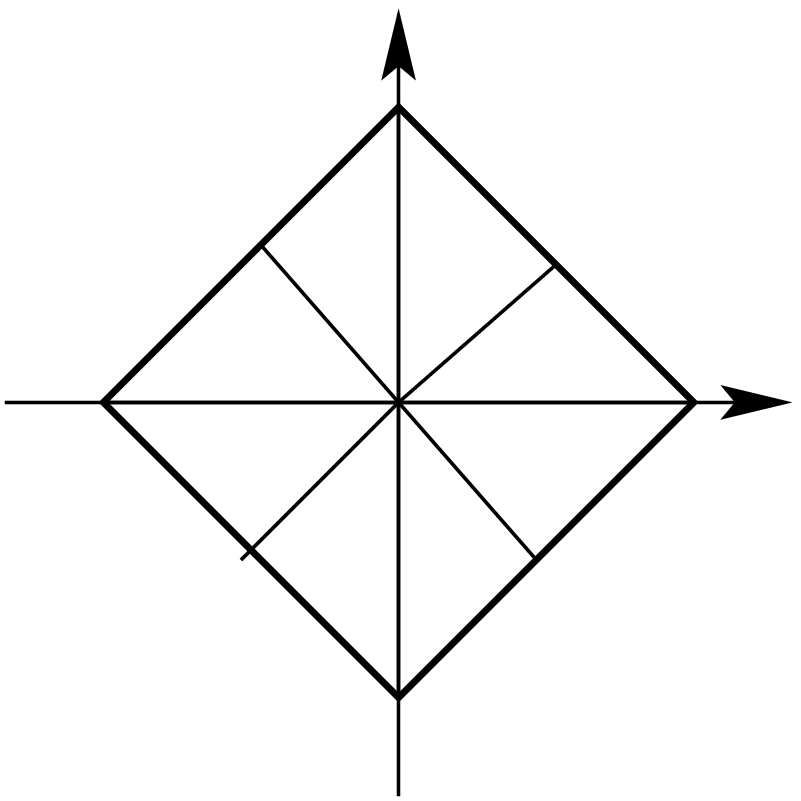}
\hspace*{-.05cm}
\includegraphics[scale=.32]{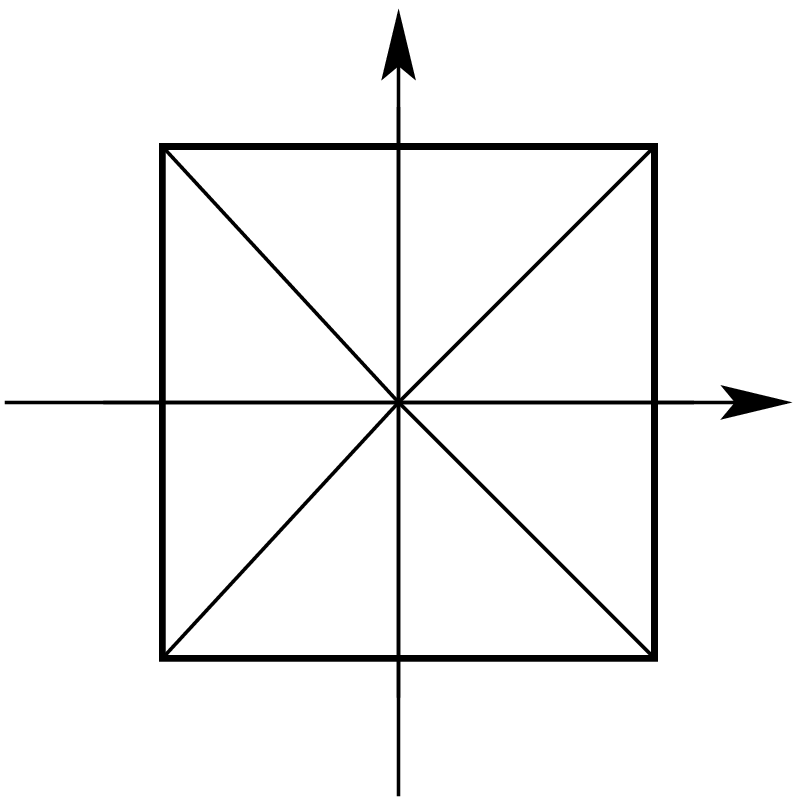}
\hspace*{-.05cm}
\includegraphics[scale=.32]{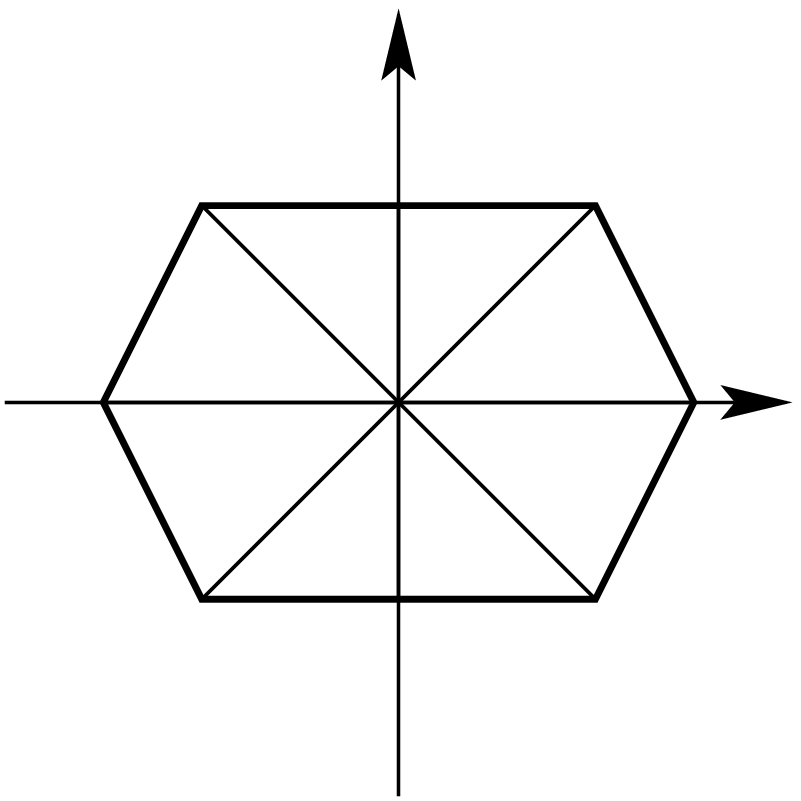} 
\hspace*{-.05cm}

\end{center}
\caption{Polyhedral unit ball, for 4 different submodular functions (two variables), with different stable inseparable sets leading to different sets of extreme points; changing values of $F$ may make some of the extreme points disappear. From left to right: $F(A) = |A|^{1/2}$ (all possible extreme points), $F(A) = |A|$ (leading to the $\ell_1$-norm),
$F(A) = \min\{|A|,1\}$ (leading to the $\ell_\infty$-norm), $F(A) =  \frac{1}{2} 1_{ \{A \cap \{2\} \neq \varnothing\} } +  1_{ \{A \neq \varnothing\} }  $
(leading to the  structured norm $\Omega(w) = \frac{1}{2}|w_2 | +   \| w\|_\infty$).
}
\label{fig:balls}
\end{figure}

We provide examples of submodular set-functions and norms in \mysec{examples}, where we go from set-functions to norms, and vice-versa.
From the definition of the \lova extension in \eq{lovasz2}, we see that $\Omega$ is a polyhedral norm (i.e., its unit ball is a polyhedron). The following proposition gives the set of extreme points of the unit ball (see proof in the appendix and examples in  \myfig{balls}):
\begin{proposition}[Extreme points of unit ball]
\label{prop:extremepoints}
The extreme points of the unit ball of $\ \Omega$ are the vectors $\frac{1}{F(A)}s$, with $s \in \{-1,0,1\}^p$, $\supp(s) = A$ and $A$ a stable inseparable set.
\end{proposition}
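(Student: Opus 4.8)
The plan is to work with the dual norm and exploit polar duality between the two unit balls. Write $B=\{w\in\rb^p:\Omega(w)\leqslant 1\}$ and $B^\ast=\{s\in\rb^p:\Omega^\ast(s)\leqslant 1\}$. Since $\Omega$ is a polyhedral norm, $B$ is a polytope with the origin in its interior and $B=(B^\ast)^\circ$; moreover $\Omega^\ast$ is again a norm, so $B^\ast$ is a bounded polytope containing the origin in its interior. By the standard polar (vertex--facet) duality for such polytopes, the extreme points of $B$ are exactly the vectors $a$ for which $a^\top s\leqslant 1$ is a facet-defining (irredundant) inequality of $B^\ast$. It therefore suffices to determine the facets of $B^\ast$.

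First I would describe $B^\ast$ explicitly. By Proposition~\ref{prop:envelope}(iii), $\Omega^\ast(s)=\max_{A\in\mathcal{T}}\|s_A\|_1/F(A)$, so $B^\ast=\{s:\|s_A\|_1\leqslant F(A)\ \forall A\in\mathcal{T}\}$. As $\|s_A\|_1$ depends only on $|s|$, the body $B^\ast$ is invariant under every sign flip of a coordinate, and its intersection with the nonnegative orthant is precisely the submodular polyhedron $\mathcal{P}$ (there $\|s_A\|_1=s(A)$); that is, $B^\ast=\{s\in\rb^p:|s|\in\mathcal{P}\}$ is the sign-symmetrization of $\mathcal{P}$. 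Writing $\|s_A\|_1=\max_{\varepsilon\in\{-1,1\}^A}\sum_{k\in A}\varepsilon_k s_k$, an inequality description of $B^\ast$ is given by $\sum_{k\in A}\varepsilon_k s_k\leqslant F(A)$ for $A\in\mathcal{T}$ and $\varepsilon\in\{-1,1\}^A$. Letting $\varepsilon_A\in\{-1,0,1\}^p$ denote the vector equal to $\varepsilon_k$ on $A$ and $0$ elsewhere, the associated candidate vertices of $B$ are the vectors $\frac{1}{F(A)}\varepsilon_A$, whose support is $A$ --- precisely the list in the statement.

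It remains to verify that these inequalities are exactly the facets of $B^\ast$, and this is the heart of the argument. Here I would invoke the structural fact recalled in the ``Separable sets'' paragraph: the facets of $\mathcal{P}$ are the coordinate faces $\{s_k=0\}$ together with the faces $\{s(A)=F(A)\}$ for $A$ stable and inseparable, i.e.\ $A\in\mathcal{T}$. Under symmetrization the coordinate hyperplanes $\{s_k=0\}$ run through the interior of the symmetric body $B^\ast$, so they bound no facet and disappear; each remaining facet $\{s(A)=F(A)\}$ of $\mathcal{P}$ is $(p-1)$-dimensional and, being distinct from the coordinate facets, is not contained in any coordinate hyperplane, hence its relative interior meets the open positive orthant, where $B^\ast$ coincides locally with $\mathcal{P}$, so $\{\sum_{k\in A}s_k=F(A)\}$ supports a genuine facet of $B^\ast$ and, by sign symmetry, so does each reflected hyperplane $\{\sum_{k\in A}\varepsilon_k s_k=F(A)\}$. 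Conversely, reflecting any facet of $B^\ast$ into the nonnegative orthant, its relative interior cannot lie in a coordinate hyperplane (such a hyperplane passes through the interior of $B^\ast$, so a boundary facet cannot be contained in it) and thus meets the open positive orthant, where it agrees with a non-coordinate facet $\{s(A)=F(A)\}$, $A\in\mathcal{T}$, of $\mathcal{P}$. Hence the facets of $B^\ast$ are exactly $\frac{1}{F(A)}\varepsilon_A^\top s\leqslant 1$, and their polar duals are the claimed extreme points.

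As a consistency check, each such point lies on the unit sphere: since $s\in\{-1,0,1\}^p$ with $\supp(s)=A$ gives $|s|=1_A$, the indicator vector of $A$, the definition of the \lova extension ($f(\delta)=F(\supp(\delta))$ on $\{0,1\}^p$) yields $\Omega(\frac{1}{F(A)}s)=\frac{1}{F(A)}f(1_A)=\frac{1}{F(A)}F(A)=1$. The main obstacle is the facet identification of the third paragraph: one must argue carefully that sign-symmetrization retains precisely the non-coordinate facets of $\mathcal{P}$ --- splitting each $A\in\mathcal{T}$ into its $2^{|A|}$ signed copies --- while destroying the coordinate ones, which is exactly where the relative-interior and dimension bookkeeping, together with the input from the ``Separable sets'' paragraph, are indispensable.
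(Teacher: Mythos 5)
Your proposal is correct and follows essentially the same route as the paper's own (much terser) proof: both rest on the cited characterization of the facets of $\mathcal{P}$ as $\{s_k=0\}$ and $\{s(A)=F(A)\}$ for $A \in \mathcal{T}$, combined with the sign symmetry of the dual ball and vertex--facet polarity. Your write-up simply supplies the bookkeeping (coordinate facets disappearing under symmetrization, comparing $B^\ast$ with $\mathcal{P}$ locally in the open positive orthant) that the paper compresses into ``considering potential different signs.''
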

This proposition shows, that depending on the number and cardinality of the inseparable stable sets, we can go from $2p$ (only singletons) to $3^p-1$ extreme points (all possible sign vectors). We show in \myfig{balls} examples of balls for $p=2$, as well as  sets of extreme points. These extreme points will play a role in concentration inequalities derived in \mysec{analysis}.

\section{Examples of nondecreasing submodular functions}

\label{sec:examples}

We consider three main types of submodular functions with potential applications to regularization for supervised learning. Some existing norms are shown to be examples of our frameworks (\mysec{overlap}, \mysec{card}), while other novel norms are designed from specific submodular functions (\mysec{eig}).
Other examples of submodular functions, in particular in terms of matroids and entropies, may be found in~\cite{fujishige2005submodular,krause2005near,kawahara22submodularity} and could also lead to interesting new norms. Note that set covers, which are  common examples of submodular functions are subcases of set-functions defined in \mysec{overlap} (see, e.g.,~\cite{submodular_tutorial}).

\subsection{Norms defined with non-overlapping or overlapping groups}
\label{sec:overlap}
\label{sec:overlapping}

We consider grouped norms defined with potentially overlapping groups~\cite{cap,jenatton2009structured}, i.e.,  $\Omega(w) = \sum_{G \subset V} d(G) \| w_{G} \|_\infty$ where $d$ is a nonnegative set-function (with potentially $d(G)=0$ when $G$ should not be considered in the norm).
It is a norm as soon as   $\cup_{G, d(G) >0 } G = V$ and it corresponds to the nondecreasing submodular function $F(A) = \sum_{G \cap A \neq \varnothing}d(G)$. In the case where  $\ell_\infty$-norms are replaced by $\ell_2$-norms, \cite{jenatton2009structured} has shown that the set of allowed sparsity patterns are intersections of complements of groups~$G$ with strictly positive weights. These sets happen to be the set of stable sets for the corresponding submodular function; thus the analysis provided in \mysec{patterns} extends the result of  \cite{jenatton2009structured} to the new case of
  $\ell_\infty$-norms. 
  However, in our situation, we can give a reinterpretation through a submodular function that counts the number of times the support $A$ intersects groups~ $G$ with non zero weights. This goes beyond restricting the set of allowed sparsity patterns to stable sets. We show later in this section some insights gained by this reinterpretation.
We now give some examples of norms, with various topologies of groups.

 \textbf{Hierarchical norms.} \hspace*{.15cm}
Hierarchical norms defined on directed acyclic graphs~\cite{cap,kim,jenattonmairal} correspond to the set-function $F(A)$ which is the cardinality of the union of ancestors of elements in $A$. These have been applied to bioinformatics~\cite{kim}, computer vision and topic models~\cite{jenattonmairal}.

 \begin{figure}
  \begin{center}

\includegraphics[scale=.3]{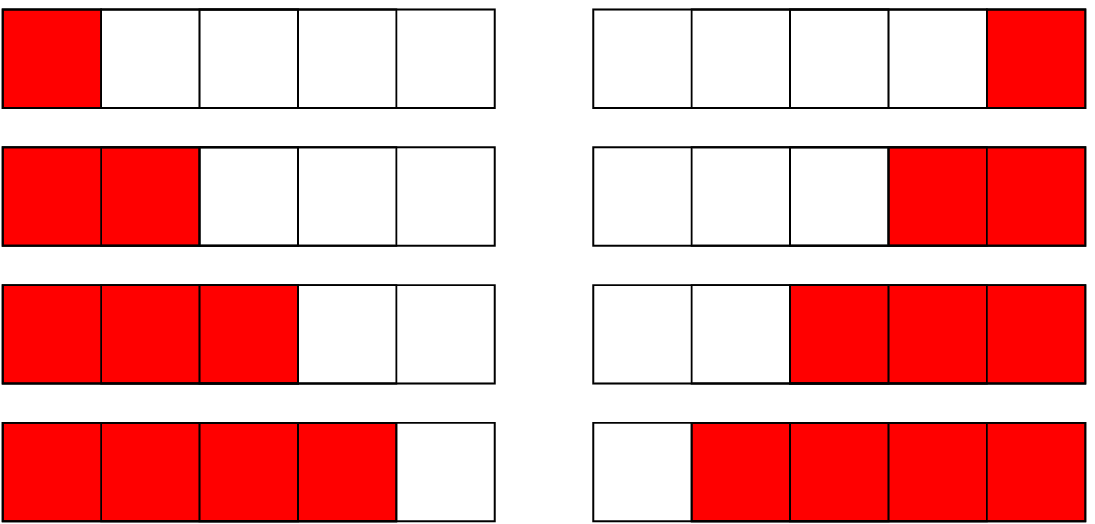} \hspace*{2cm}
\includegraphics[scale=.3]{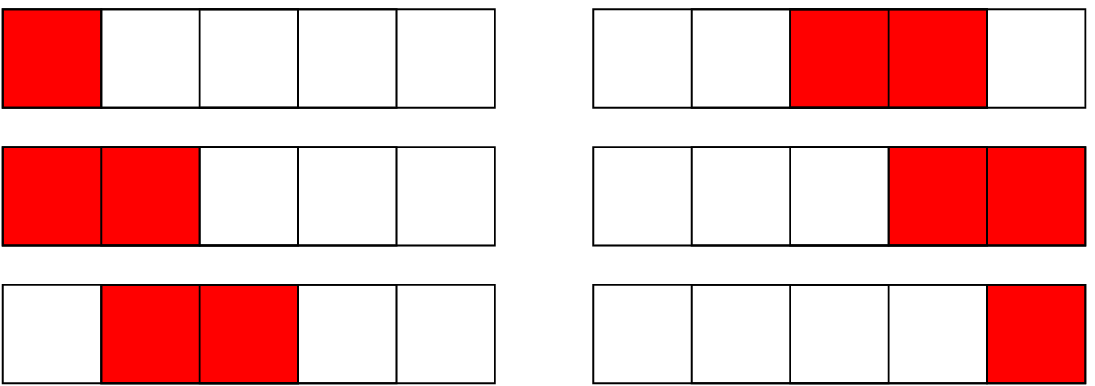}

  \caption{Sequence and groups: (left) groups for contiguous patterns, (right) groups for penalizing the number of jumps in the indicator vector sequence.}
  \label{fig:sequence}
 \end{center}

\end{figure}
 
 \begin{figure}
  \begin{center}

  \hspace*{-.2cm}
\includegraphics[width=3.2cm]{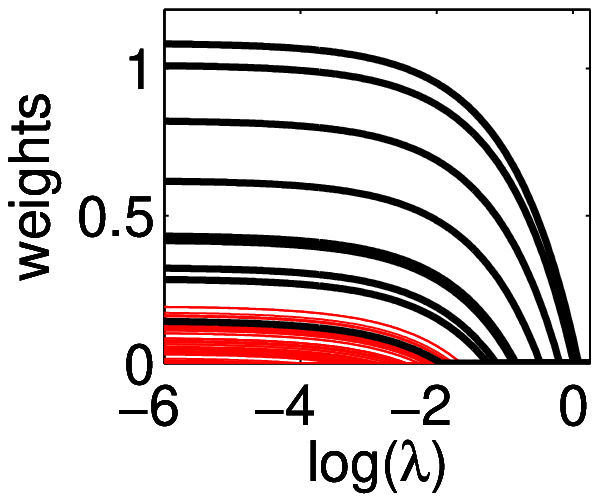} \hspace*{.52cm}
\includegraphics[width=3.2cm]{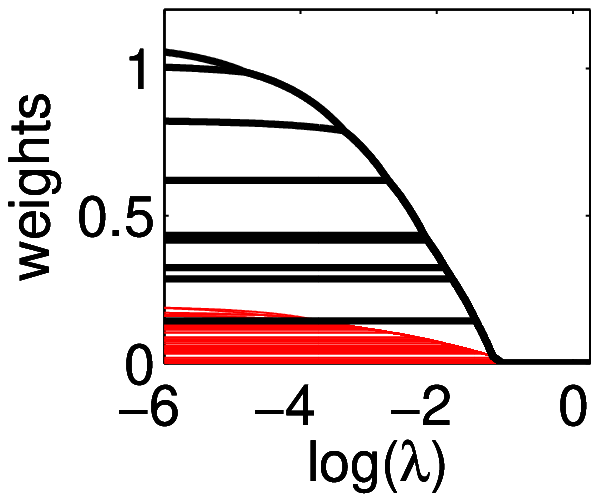}\hspace*{-.05cm}
\includegraphics[width=3.2cm]{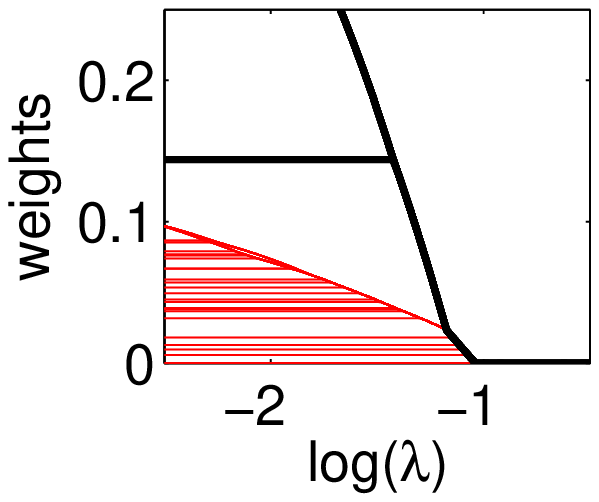}\hspace*{.52cm}
\includegraphics[width=3.2cm]{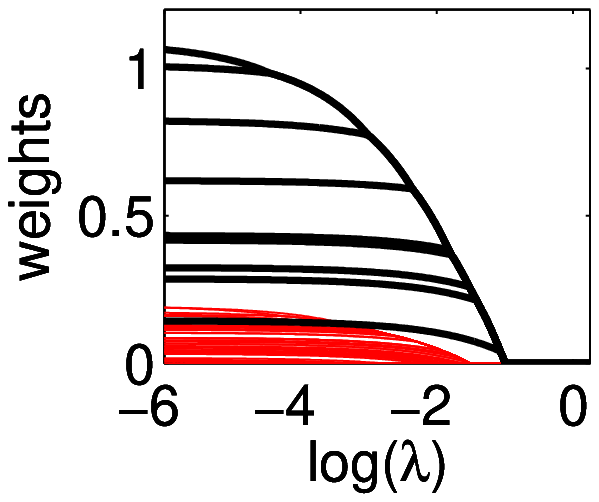}
\hspace*{-.2cm}

  \caption{Regularization path for a penalized least-squares problem (black: variables that should be active, red: variables that should be left out). From left to right: $\ell_1$-norm penalization (a wrong variable is included with the correct ones), polyhedral norm for rectangles in 2D, with zoom (all variables come in together), mix of the two norms (correct behavior).}
  \label{fig:mix}
 \end{center}

\end{figure}

\textbf{Norms defined on grids.} \hspace*{.15cm}
If we assume that the $p$ variables are organized in a 1D, 2D or 3D grid,~\cite{jenatton2009structured} considers norms based on overlapping groups leading to stable sets equal to rectangular or convex shapes, with applications in computer vision~\cite{SparseStructuredPCA}. For example, for the groups defined in the left side of \myfig{sequence} (with unit weights), we have $F(A)=p-2+{\rm range}(A)$ if $A \neq \varnothing$ and $F(\varnothing) = 0$  (the range of $A$ is equal to $\max(A) - \min(A)+1$). From empty sets to non-empty sets, there is a gap of $p-1$, which is larger than differences among non-empty sets. This leads to the undesired result, which has been already observed by~\cite{jenatton2009structured}, of adding all variables in one step, rather than gradually, when the regularization parameter decreases in a regularized optimization problem. In order to counterbalance this effect, adding a constant times the cardinality function has the effect of making the first gap relatively smaller. This corresponds to adding a constant times the $\ell_1$-norm and, as shown in \myfig{mix}, solves the problem of having all variables coming together. All patterns are then allowed, but contiguous ones are \emph{encouraged rather than forced}.

Another interesting new norm may be defined from the groups in the right side of \myfig{sequence}. Indeed, it corresponds to the function $F(A)$ equal to $|A|$ plus the number of intervals of $A$. Note that this also favors contiguous patterns but is  not limited to selecting a single interval (like the norm obtained from groups  in the left side of \myfig{sequence}). Note that it is to be contrasted with the total variation (a.k.a.~fused Lasso penalty~\cite{tibshirani2005sparsity}), which is a relaxation of the number of jumps in a vector $w$ rather than in its support. In 2D or 3D, this extends to the notion of perimeter and area, but we do not pursue such extensions here.

\subsection{Spectral functions of submatrices}
\label{sec:eig}

Given a positive semidefinite matrix $Q \in \rb^{ p \times p }$ and a real-valued  function $h$ from $\rb_+ \to \rb$, one may define $\tr [h(Q)]$ as $\sum_{i=1}^p h(\lambda_i)$ where $\lambda_1,\dots,\lambda_p$ are the (nonnegative) eigenvalues of $Q$~\cite{horn1990matrix}. We can thus define the set-function $F(A) = \tr h(Q_{AA})$ for $A \subset V$.
The functions $h(\lambda) = \log( \lambda + t)$ for $t\geqslant 0$ lead to submodular functions, as they correspond to  entropies of Gaussian random variables (see, e.g.,~\cite{fujishige2005submodular,submodular_tutorial}). Thus, since for $q \in (0,1)$,
$\lambda^q = \frac{ q \sin q \pi }{\pi} \int_0^\infty  \log (1+ \lambda/t) t^{q-1} dt$~(see, e.g.,~\cite{ando1979concavity}), $h(\lambda) = \lambda^q$ for $q \in (0,1]$ are positive linear combinations of functions that lead to nondecreasing submodular functions. Thus, they are also nondecreasing submodular functions, and, to the best of our knowledge, provide novel examples of such functions.

In the context of supervised learning from a design matrix $X \in \rb^{n \times p}$, we naturally use $Q=X^\top X$.
If $h$ is linear, then $F(A) = \tr X_A^\top X_A = 
\sum_{k \in A} X_k^\top X_k $ (where $X_A$ denotes the submatrix of $X$ with columns in $A$) and we obtain a weighted cardinality function and hence and a weighted $\ell_1$-norm, which is a \emph{factorial prior}, i.e., it is a sum of terms depending on each variable independently.

In a frequentist setting, the Mallows $C_L$ penalty~\cite{mallows} depends on the degrees of freedom, of the form $\tr  X_{A}^\top  X_{A} ( X_{A}^\top  X_{A}+  \lambda\idm)^{-1}$. This is a non-factorial prior but unfortunately it does not lead to a submodular function. In a Bayesian context however, it is shown by~\cite{wipf} that penalties of the form 
$\log \det (  X_{A}^\top  X_{A}+ \lambda\idm) $ (which lead to submodular functions) correspond to marginal likelihoods associated to the set $A$ and have good behavior when used within a non-convex framework.
This highlights the need for non-factorial priors which are sub-linear functions of the eigenvalues of $X_A^\top X_A$, which is exactly what nondecreasing submodular function of submatrices are.
 We do not pursue the extensive evaluation of non-factorial convex priors in this paper but provide in simulations examples with $F(A) = \tr (X_{A}^\top X_{A})^{1/2}$ (which is   equal to the trace norm of $X_A$~\cite{boyd}).

\subsection{Functions of cardinality}
\label{sec:card}

For $F(A) = h(|A|)$ where $h$ is nondecreasing, such that $h(0)=0$ and concave, then, from \eq{lovasz2}, $\Omega(w)$ is defined from the rank statistics of $|w| \in \rb_{+}^p$, i.e., if $|w_{(1)} | \geqslant
| w_{(2)}|  \geqslant \cdots \geqslant |w_{(p)}|
 $, then
 $\Omega(w) = \sum_{k=1}^p  [ h(k)-h(k-1) ] |w_{(k)}|$.  This includes the sum of the $q$ largest elements, and might lead to interesting new norms for unstructured variable selection but this is not pursued here. However, the algorithms and analysis presented in \mysec{prox} and \mysec{analysis} apply to this case.

\section{Convex analysis and optimization}

\label{sec:prox}
 \label{sec:optimization}
 In this section we provide  algorithmic tools related to  optimization problems based on the regularization by our novel sparsity-inducing norms. Note that since these norms are polyhedral norms with unit balls having potentially  an exponential number of vertices or faces, regular linear programming toolboxes may not be used.
 
\textbf{Subgradient.} \hspace*{.15cm}
From $\Omega(w) = \max_{ s \in \mathcal{P}} s^\top |w|$ and the greedy algorithm\footnote{The greedy algorithm to find extreme points of the submodular polyhedron should not be confused with the greedy algorithm (e.g., forward selection) that we consider in \mysec{simulations}.} presented in \mysec{submodular}, one can easily get in \emph{polynomial time} one subgradient as one of the maximizers $s$. This allows to use subgradient descent, with, as shown in \myfig{runningtimes}, slow convergence compared to proximal methods.

\textbf{Proximal operator.} \hspace*{.15cm}
Given regularized problems of the form $\min_{w \in \rb^p} L(w) + \lambda \Omega(w)$, where $L$ is differentiable with Lipschitz-continuous gradient, \emph{proximal methods} have been shown to be particularly efficient first-order methods~(see, e.g.,~\cite{beck2009fast}). In this paper, we consider the methods ``ISTA'' and its accelerated variants ``FISTA''~\cite{beck2009fast}, which are compared in \myfig{runningtimes}.

To apply these methods, it suffices to be able to
solve efficiently problems of the form: $\min_{ w \in \rb^p} \frac{1}{2} \| w - z \|_2^2 + \lambda \Omega(w)$. In the case of the $\ell_1$-norm, this reduces to soft thresholding of~$z$, the following proposition (see proof in the appendix) shows that this is equivalent to a particular algorithm for submodular function minimization, namely the minimum-norm-point algorithm, which has no complexity bound but is empirically faster than algorithms with such bounds~\cite{fujishige2005submodular}:

\begin{proposition}[Proximal operator]
\label{prop:proximal}
Let $z \in \rb^p$ and $\lambda>0$, minimizing  $\frac{1}{2} \| w - z \|_2^2 + \lambda \Omega(w)$ is equivalent to finding the minimum of the submodular function $A \mapsto \lambda F(A) -  |z|(A)$ with the minimum-norm-point algorithm.
\end{proposition}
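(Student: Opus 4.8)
The plan is to dualize the proximal problem and to recognize its dual, after an affine change of variable, as the minimum-norm-point problem in the base polyhedron of the submodular function $G : A\mapsto \lambda F(A)-|z|(A)$, which by \eq{mini} is precisely what the minimum-norm-point algorithm computes. First I would eliminate the absolute values: since $\Omega(w)=f(|w|)$ depends on $w$ only through $|w|$, and since $f$ is nondecreasing on $\rb_+^p$ (its subgradients, produced by the greedy algorithm, lie in $\mathcal{P}\subset\rb_+^p$), at an optimum each $w_k$ must share the sign of $z_k$ and satisfy $0\le|w_k|\le|z_k|$. Thus it suffices to solve $\min_{w\ge 0}\frac{1}{2}\|w-|z|\|_2^2+\lambda f(w)$ and restore the signs at the end.

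Next I would dualize using the representation $f(w)=\max_{s\in\mathcal{P}}w^\top s$ from \eq{poly}, writing the reduced problem as $\min_{w\ge 0}\max_{s\in\lambda\mathcal{P}}\frac{1}{2}\|w-|z|\|_2^2+w^\top s$. Because the objective is convex in $w$, linear in $s$, and $\lambda\mathcal{P}$ is convex and compact, Sion's minimax theorem allows exchanging the min and the max; the inner minimization over $w\ge 0$ is separable with solution $w_k=(|z_k|-s_k)_+$. Since $\mathcal{P}$ is down-closed ($0\le s'\le s\in\mathcal{P}\Rightarrow s'\in\mathcal{P}$), the optimal $s^\ast$, namely the orthogonal projection of $|z|$ onto $\lambda\mathcal{P}$, automatically satisfies $s^\ast\le|z|$, so the positive parts are inactive and $w^\ast=|z|-s^\ast$. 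Hence the proximal problem reduces to projecting $|z|$ onto $\lambda\mathcal{P}$.

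Finally I would link this projection to $G$. The affine map $s\mapsto s-|z|$ is a bijection from $\mathcal{B}(\lambda F)=\lambda\mathcal{B}(F)$ onto $\mathcal{B}(G)$, since $t(A)\le\lambda F(A)$ with $t(V)=\lambda F(V)$ is equivalent to $(t-|z|)(A)\le G(A)$ with $(t-|z|)(V)=G(V)$. Under this map, minimizing $\|s\|_2^2$ over $\mathcal{B}(G)$ corresponds to projecting $|z|$ onto the base face $\lambda\mathcal{B}(F)$, and one checks that the positive part of this base projection reproduces $|z|-s^\ast$, so that $w^\ast_k=\sign(z_k)\,(-\,r^\ast_k)_+$, where $r^\ast$ is the minimum-norm point of $\mathcal{B}(G)$. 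By \eq{mini}, $r^\ast$ is exactly the certificate returned by the minimum-norm-point algorithm when minimizing $G$, whose negative-coordinate set moreover equals $\supp(w^\ast)=\argmin_{A}G(A)$.

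The main obstacle I expect is the primal recovery: reconciling the projection onto the full polyhedron $\lambda\mathcal{P}$ (which gives the clean identity $w^\ast=|z|-s^\ast$) with the projection onto its base face $\lambda\mathcal{B}(F)$ that the algorithm actually delivers, i.e.\ proving that truncating the base projection by its positive part yields the same unique minimizer. The supporting steps---the down-closedness argument that discharges the constraint $w\ge 0$ and the justification of the minimax exchange---are routine by comparison.
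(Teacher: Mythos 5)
Your proposal is correct and takes essentially the same route as the paper: dualize the proximal problem (the paper via the constraint $\Omega^\ast(s)\leqslant 1$, you via $f(w)=\max_{s\in\mathcal{P}}s^\top w$ after reducing to nonnegative $w$), recognize the dual as the Euclidean projection of $|z|$ (resp.\ $z/\lambda$) onto the scaled submodular polyhedron, and then identify this projection, after the affine shift by $|z|$, with the minimum-norm point of the base polyhedron of $G=\lambda F-|z|$. The step you single out as the main obstacle---that taking the positive part of the base-polyhedron projection recovers the projection onto the full polyhedron, i.e.\ $w^\ast_k=\sign(z_k)(-r^\ast_k)_+$---is precisely what the paper discharges by citing Lemma~7.4 of~\cite{fujishige2005submodular}, so it is a known result rather than a gap.
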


In the proof, it is shown how a solution for one problem may be obtained from a solution to the other problem. Moreover, any algorithm for minimizing submodular functions allows to get directly the support of the unique solution of the proximal problem and that with a sequence of submodular function minimizations, the full solution may also be obtained. Similar links between convex optimization and minimization of submodular functions have been considered (see, e.g., \cite{chambolle2009total}). However, these are dedicated to \emph{symmetric} submodular functions (such as the ones obtained from graph cuts) and are thus not directly applicable to our situation of \emph{non-increasing} submodular functions.

Finally, note that using the minimum-norm-point algorithm leads to a \emph{generic} algorithm that can be applied to \emph{any} submodular functions $F$, and that it may be rather inefficient for simpler subcases (e.g., the $\ell_1/\ell_\infty$-norm, tree-structured groups~\cite{jenattonmairal}, or general overlapping groups~\cite{Mairal10aNIPS}).

\section{Sparsity-inducing properties}

\label{sec:sparsity}
\label{sec:analysis}
In this section, we consider a fixed design matrix $X \in \rb^{n \times p}$  and $y \in \rb^n$ a vector of random responses. Given $\lambda >0$, we define
$\hat{w}$ as a minimizer of the regularized least-squares cost:
\BEQ
\label{eq:objective}
\textstyle
\min_{w \in \rb^p} \textstyle \frac{1}{2n} \| y - X w\|_2^2 + \lambda \Omega(w).
\EEQ
We study the sparsity-inducing properties of solutions of \eq{objective}, i.e., we determine in \mysec{patterns} which patterns are allowed and in \mysec{high} which sufficient conditions lead to correct estimation. Like recent analysis of sparsity-inducing norms~\cite{negahban2009unified}, the analysis provided in this section relies heavily on decomposability properties of our norm $\Omega$.

\subsection{Decomposability}

For a subset $J$ of $V$, we denote
by $F_J: 2^J \to \rb$ the \emph{restriction} of $F$ to $J$, defined for $A \subset J$ by $F_J(A) = F(A)$, and by $F^J: 2^{J^c} \to \rb$ the \emph{contraction} of $F$ by $J$, defined for $A \subset J^c$ by $F^J(A) = F(A \cup J) - F(A)$.  These two functions are submodular and nondecreasing as soon as $F$ is (see, e.g.,~\cite{fujishige2005submodular}).

We denote by $\Omega_{J}$ the norm on $\rb^J$ defined through the submodular function $F_J$, and $\Omega^J$ the pseudo-norm defined on $\rb^{J^c}$ defined through $F^J$ (as shown in Proposition~\ref{prop:decomposability}, it is a norm only when $J$ is a stable set). Note that $\Omega_{J^c}$ (a norm on $J^c$) is in general different from $\Omega^J$. Moreover, $\Omega_J(w_J)$ is actually equal to $\Omega(\tilde{w})$ where $\tilde{w}_J = w_J$ and $\tilde{w}_{J^c} = 0$, i.e., it is the restriction of $\Omega$ to $J$.

We can now prove the following decomposition properties, which show that under certain circumstances, we can decompose the norm $\Omega$ on subsets $J$ and their complements:

\begin{proposition}[Decomposition]
\label{prop:decomposability}
Given $J \subset V$ and $\Omega_J $ and $\Omega^J $ defined as above, we have:

(i) $\forall w \in \rb^p$, $\Omega(w) \geqslant \Omega_J(w_J) +  \Omega^{J}(w_{J^c})$,

(ii) $\forall w \in \rb^p$,  if $\min_{ j \in J } | w_j| \geqslant \max_{ j \in J^c } |w_j|$ , then $\Omega(w) = \Omega_J(w_J) +  \Omega^{J}(w_{J^c})$,


(iii) $\Omega^J$ is a norm on $\rb^{J^c}$ if and only if $J$ is a stable set.
\end{proposition}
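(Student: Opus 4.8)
The plan is to work throughout with $v=|w| \in \rb_+^p$, since $\Omega$, $\Omega_J$ and $\Omega^J$ depend only on the vector of absolute values; this reduces every claim to a statement about the \lova extensions $f$, $f_J$ (of $F_J$) and $f^J$ (of $F^J$) on the nonnegative orthant. For part~(i) I would invoke the maximal representation \eq{poly}, which holds for any nondecreasing submodular function vanishing at $\varnothing$ and hence applies to $F$, $F_J$ and $F^J$ alike (strict positivity on singletons plays no role here). Writing $\mathcal{P}$, $\mathcal{P}(F_J)\subset\rb_+^J$ and $\mathcal{P}(F^J)\subset\rb_+^{J^c}$ for the associated submodular polyhedra, I would pick maximizers $t$ and $u$ with $\Omega_J(w_J)=v_J^\top t$ and $\Omega^J(w_{J^c})=v_{J^c}^\top u$, and glue them into $s\in\rb_+^p$ by setting $s_J=t$ and $s_{J^c}=u$.

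The key step, and the one I expect to be the main obstacle, is to verify that this glued vector $s$ lies in $\mathcal{P}$; once this is done, $\Omega(w)=\max_{s'\in\mathcal{P}} v^\top s' \geqslant v^\top s = \Omega_J(w_J)+\Omega^J(w_{J^c})$ gives (i) immediately. For $A\subset V$ I would split $A=(A\cap J)\cup(A\cap J^c)$ and use feasibility of $t,u$ to bound $s(A)=t(A\cap J)+u(A\cap J^c)\leqslant F(A\cap J)+F^J(A\cap J^c)=F(A\cap J)+F((A\cap J^c)\cup J)-F(J)$. It then remains to show this is at most $F(A)$, which rearranges to $F((A\cap J^c)\cup J)-F(J)\leqslant F((A\cap J)\cup(A\cap J^c))-F(A\cap J)$, i.e.\ the diminishing-returns inequality stating that appending the set $A\cap J^c$ to the larger base $J$ yields no more than appending it to the smaller base $A\cap J\subset J$. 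This is a direct consequence of submodularity and is the technical heart of the proof.

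For part~(ii) I would compute $f(v)$ directly from \eq{lovasz2}. The hypothesis $\min_{j\in J} v_j \geqslant \max_{j\in J^c} v_j$ allows a decreasing ordering of $v$ in which all indices of $J$ precede all indices of $J^c$ (ties being harmless by the remark following \eq{lovasz2}). In the resulting telescoping sum, the first $|J|$ terms involve only subsets of $J$ and reassemble into $f_J(v_J)=\Omega_J(w_J)$, while each later term has the form $v_{j_k}\big[F(J\cup B_k)-F(J\cup B_{k-1})\big]$ with $B_k\subset J^c$ growing one element at a time; subtracting and adding $F(J)$ rewrites these as increments of $F^J$, so they reassemble into $f^J(v_{J^c})=\Omega^J(w_{J^c})$. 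This produces the claimed equality, consistent with (i).

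For part~(iii) I would first reduce the norm property to a statement about singletons. Since $F^J$ is nondecreasing and submodular, Proposition~\ref{prop:envelope}(i) applied to $F^J$ shows $\Omega^J$ is a norm whenever $F^J(\{k\})=F(J\cup\{k\})-F(J)>0$ for every $k\in J^c$; conversely, if $F^J(\{k\})=0$ for some $k$, then $\Omega^J(e_k)=f^J(e_k)=F^J(\{k\})=0$ while $e_k\neq 0$, so positive definiteness fails and $\Omega^J$ is only a seminorm. Thus it suffices to identify the condition ``$F(J\cup\{k\})>F(J)$ for all $k\in J^c$'' with stability of $J$. If $J$ is stable, taking $B=J\cup\{k\}$ gives the strict increment directly; conversely, if every such singleton increment is strictly positive, then any $B\supsetneq J$ contains some $k\in B\setminus J$, and monotonicity yields $F(B)\geqslant F(J\cup\{k\})>F(J)$, so $J$ is stable. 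This completes the equivalence, and the parts above are essentially routine once the submodularity bound in (i) is established.
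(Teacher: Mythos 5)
Your proof is correct and follows essentially the same route as the paper's: part (i) is the paper's argument that $\mathcal{P}$ contains the product of the submodular polyhedra of $F_J$ and $F^J$ (your ``diminishing returns'' step is exactly the submodular inequality $F(A)+F(J) \geqslant F(A\cup J)+F(A\cap J)$ the paper invokes), part (ii) is the same ordering argument on the Lov\'asz extension, and part (iii) is the same reduction to strict positivity of $F^J$ on singletons together with its equivalence to stability of $J$. You simply supply more detail than the paper's terse write-up.
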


\subsection{Sparsity patterns}
\label{sec:patterns}

In this section, we do not make any assumptions regarding the correct specification of the linear model. We show that with probability one, only stable support sets may be obtained  (see proof in the appendix). For simplicity, we assume invertibility of $X^\top X$, which forbids the high-dimensional situation $p\geqslant n$ we consider in \mysec{high}, but we could consider assumptions similar to the ones used in~\cite{jenatton2009structured}.  

\begin{proposition}[Stable sparsity patterns]
\label{prop:patterns}
Assume  $y \in \rb^n$  has an absolutely continuous density with respect to
the Lebesgue measure and that $X^\top X $ is invertible. Then the minimizer $\hat{w}$ of \eq{objective} is unique and, 
 with probability one, its support
$ \supp(\hat{w})$ is a stable set.
\end{proposition}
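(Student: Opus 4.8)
The plan is to dispatch uniqueness first and then show that each non-stable support can occur only for $y$ in a Lebesgue-null set. Uniqueness is immediate: since $X^\top X$ is invertible, $w \mapsto \frac{1}{2n}\|y - Xw\|_2^2$ is strictly convex and coercive, and adding the convex term $\lambda\Omega(w)$ keeps the objective of \eqref{eq:objective} strictly convex and coercive, so $\hat w$ exists and is unique. For the support, recall that by monotonicity and submodularity a set $J$ fails to be stable exactly when there is some $j_0 \in J^c$ with $F(J \cup \{j_0\}) = F(J)$, i.e.\ with $F^{J}(\{j_0\}) = 0$; equivalently, by Proposition~\ref{prop:decomposability}(iii), $\Omega^{J}$ fails to be a norm. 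Since there are finitely many subsets $J \subset V$, it suffices to prove that for each non-stable $J$ the event $\{\supp(\hat w) = J\}$ has probability zero and then take a union bound.

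The key observation is that a non-stable support leaves a ``free'' coordinate along which $\Omega$ is locally flat. Fix a non-stable $J$ and choose $j_0 \in J^c$ with $F^{J}(\{j_0\}) = 0$, so that $\Omega^{J}(e_{j_0}) = F^{J}(\{j_0\}) = 0$, where $e_{j_0}$ is the $j_0$-th basis vector. Suppose $\supp(\hat w) = J$. For $|\epsilon| \leqslant \min_{j \in J}|\hat w_j|$ the vector $\hat w + \epsilon e_{j_0}$ satisfies the hypothesis of Proposition~\ref{prop:decomposability}(ii), whence $\Omega(\hat w + \epsilon e_{j_0}) = \Omega_J(\hat w_J) + \Omega^{J}(\epsilon e_{j_0}) = \Omega(\hat w)$; that is, $\Omega$ is constant along this segment. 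Consequently, near $\epsilon = 0$ the objective reduces to $\frac{1}{2n}\|y - X\hat w - \epsilon X_{j_0}\|_2^2$ plus a constant, a smooth function of $\epsilon$ minimized at the interior point $\epsilon = 0$. Setting its derivative to zero yields the necessary condition $X_{j_0}^\top(y - X\hat w) = 0$.

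It remains to show this condition confines $y$ to a null set. On $\{\supp(\hat w) = J\}$ the stationarity conditions restricted to $J$ read $\frac1n X_J^\top(X_J \hat w_J - y) + \lambda g_J = 0$ for some subgradient $g_J \in \partial\Omega_J(\hat w_J)$, so $\hat w_J = (X_J^\top X_J)^{-1}(X_J^\top y - n\lambda g_J)$ (the block $X_J^\top X_J$ is invertible because $X^\top X$ is), and the residual equals $y - X\hat w = (I - P_J)y + n\lambda X_J(X_J^\top X_J)^{-1} g_J$, where $P_J$ is the orthogonal projection onto the range of $X_J$. The free-coordinate condition then becomes the affine equation $[(I - P_J)X_{j_0}]^\top y + n\lambda\, X_{j_0}^\top X_J(X_J^\top X_J)^{-1} g_J = 0$ in $y$. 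Its linear part $(I - P_J)X_{j_0}$ is nonzero, for otherwise $X_{j_0}$ would lie in the span of the columns of $X_J$, contradicting the linear independence of the columns of $X$ guaranteed by invertibility of $X^\top X$. Hence, once the combinatorial data are fixed, the admissible $y$ lie in a hyperplane, which is Lebesgue-null and so has probability zero under the absolutely continuous law of $y$.

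The one point that must be handled with care---and the main obstacle---is that $g_J$ is a priori $y$-dependent through the ordering of the entries of $|\hat w_J|$, so one must check that the residual really depends on $y$ only through the fixed slope $(I - P_J)y$. The clean way to organize this is that the solution map $y \mapsto \hat w(y)$ is piecewise affine with finitely many full-dimensional regions, on the relative interior of each of which the support, the signs, and the ordering of $|\hat w_j|$ are constant; there $g_J$ equals the corresponding fixed greedy vector read off from \eqref{eq:lovasz2}, making the residual an affine function of $y$ with slope $(I - P_J)$ as used above, while the boundaries between regions (ties and sign changes) form a finite union of lower-dimensional, hence null, sets. Summing the finitely many resulting hyperplanes over non-stable $J$, over sign patterns and orderings, together with these boundary sets, produces a null set outside of which $\supp(\hat w)$ is stable, which completes the argument.
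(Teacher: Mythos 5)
Your uniqueness argument and your derivation of the free-coordinate condition are sound: using Proposition~\ref{prop:decomposability}(ii) together with $F^J(\{j_0\})=0$ to show that $\Omega$ is constant along the segment $\hat w+\epsilon e_{j_0}$, hence that $X_{j_0}^\top(y-X\hat w)=0$, is a clean, self-contained substitute for the subdifferential-decomposition lemma (Lemma~\ref{prop:subdifferential}) that the paper invokes at this point. The gap is in your last step, and it is exactly the point you flagged yourself: you dispose of the $y$-dependence of the subgradient $g_J$ by asserting that ties among the $|\hat w_j|$, $j\in J$, occur only on lower-dimensional, hence null, sets of $y$. For the norms of this paper that assertion is false. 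Take $F(A)=\min\{|A|,1\}$, so that $\Omega=\|\cdot\|_\infty$: the minimizer then has several components of equal magnitude on sets of $y$ of \emph{positive} Lebesgue measure---this clustering is precisely the behavior such norms are designed to induce (compare Proposition~\ref{prop:extremepoints}: extreme points of the unit ball have all their nonzero coordinates of equal magnitude). Thus there are full-dimensional affine pieces of the solution map $y\mapsto\hat w(y)$ on which $\partial\Omega_J(\hat w_J)$ is a nontrivial face rather than a singleton; on such a piece $g_J$ is pinned down by stationarity, varies affinely with $y$, and is \emph{not} a fixed greedy vector. Consequently the residual is $(I-X_JM_J)y+\mathrm{const}$ for some matrix $M_J$ different from $(X_J^\top X_J)^{-1}X_J^\top$, and your nonvanishing argument, which rests on $(I-P_J)X_{j_0}\neq 0$, no longer applies. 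Concretely, for $p=3$, $\Omega=\|\cdot\|_\infty$ and the non-stable set $J=\{1,2\}$, the sub-event $\{\supp(\hat w)=J,\ |\hat w_1|=|\hat w_2|\}$ is covered neither by your hyperplane argument nor by your null-set claim.

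The conclusion does survive, but proving it requires controlling the structure of $M_J$ on \emph{every} piece, ties included, which is what the paper's proof does and your sketch omits. The paper writes the optimality conditions with the representation $\Omega_J(w_J)=\max_{z\in B}z^\top w_J$ over the finite set $B$ of (signed) extreme points, uses Carath\'eodory's theorem to bound the number of active multipliers, and solves the resulting linear system to show that on $\{\supp(\hat w)=J\}$ one has $\hat w_J=(Q_{JJ}^{-1}-A_{JJ})r_J+b_J$ with $r=\frac1n X^\top y$, $0\preccurlyeq A_{JJ}\preccurlyeq Q_{JJ}^{-1}$, and $(A_{JJ},b_J)$ ranging over a finite set independent of $y$. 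Then the coefficient of $y$ in the free-coordinate equation is nonzero because $Q_{j_0J}(Q_{JJ}^{-1}-A_{JJ})Q_{Jj_0}-Q_{j_0j_0}\leqslant Q_{j_0J}Q_{JJ}^{-1}Q_{Jj_0}-Q_{j_0j_0}<0$ by the Schur complement lemma and invertibility of $Q$---an inequality that is robust to the positive-semidefinite correction $A_{JJ}$ produced by ties. (In the $\ell_\infty$ example above, $X_JM_J$ turns out to be the projection onto $\mathrm{span}(X_1+X_2)$ rather than onto $\mathrm{span}(X_J)$; the hyperplane conclusion still holds, but only because of this structure, which must be established.) So your overall strategy is the right one and close in spirit to the paper's, but the finitely-many-affine-pieces step with the correct semidefinite structure is the heart of the proof, not a technicality that can be waved away.
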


\subsection{High-dimensional inference}
\label{sec:high}

We now assume that the linear model is well-specified and extend results from~\cite{Zhaoyu} for sufficient support recovery conditions and from~\cite{negahban2009unified} for estimation consistency. As seen in Proposition~\ref{prop:decomposability}, the norm $\Omega$ is decomposable and we use this property extensively in this section.
We denote by $\rho(J) = \min_{B \subset J^c} \frac{ F(B \cup J) - F(J) } { F(B)}$; by submodularity and monotonicity of $F$, $\rho(J)$ is always between zero and one, and,  as soon as $J$ is stable it is strictly positive (for the $\ell_1$-norm, $\rho(J)=1$). Moreover, we denote by $c(J) = \sup_{w \in \rb^p }   { \Omega_J(w_J) }/ { \| w_J \|_2 }$, the equivalence constant between the norm $\Omega_J$ and the $\ell_2$-norm. We always have $c(J) \leqslant |J|^{1/2} \max_{k \in V} F(\{k\}) $ (with equality for the $\ell_1$-norm).

The following propositions allow us to get back and extend well-known results for the $\ell_1$-norm, i.e.,  Propositions~\ref{prop:support} and~\ref{prop:proba} extend results based on support recovery conditions~\cite{Zhaoyu};
while Propositions \ref{prop:high-dim} and \ref{prop:proba} extend results based on restricted eigenvalue conditions~(see, e.g.,~\cite{negahban2009unified}). We can also get back results for the $\ell_1$/$\ell_\infty$-norm~\cite{negahban2008joint}. As shown in the appendix,   proof techniques are similar and are adapted through the  decomposition properties from Proposition~\ref{prop:decomposability}.

\begin{proposition}[Support recovery]
\label{prop:support}
Assume that $y = Xw^\ast + \sigma \varepsilon$, where $\varepsilon$ is a standard multivariate normal vector. Let $Q = \frac{1}{n} X^\top X \in \rb^{p \times p}$.  Denote by $J$ the smallest stable set containing the  support $\supp(w^\ast)$ of $w^\ast$. Define $\nu = \min_{ j, w^\ast_j \neq 0 } | w^\ast_j | >0$, assume $\kappa = \lambda_{\min} (Q_{JJ}) > 0$ and that for $\eta >0$,
$(\Omega^J)^\ast[  ( \Omega_J(   Q_{JJ}^{-1} Q_{Jj} ) )_{j \in J^c} ] \leqslant 1 - \eta$.
 Then, if $\lambda \leqslant \frac{\kappa \nu }{2 c(J)} $, the minimizer $\hat{w}$ is unique and has support equal to $J$, with   probability larger than 
 $
1-3 P \big( \Omega^\ast(z) > \frac{ \lambda \eta \rho(J)  \sqrt{n} }{ 2 \sigma   } \big)
 $,
 where $z$ is a multivariate normal with covariance matrix $Q$.
 
 \end{proposition}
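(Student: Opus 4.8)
The plan is to run the \emph{primal-dual witness} (restricted-solution) construction that underlies irrepresentable-condition proofs for the Lasso, adapted to $\Omega$ through the decomposition of Proposition~\ref{prop:decomposability}. First I would build a candidate $\hat w$ supported on $J$: let $\hat w_J$ minimize the restricted problem $\min_{v \in \rb^J} \frac{1}{2n}\|y - X_J v\|_2^2 + \lambda\Omega_J(v)$ and set $\hat w_{J^c}=0$. Since $\kappa = \lambda_{\min}(Q_{JJ})>0$ this restricted cost is strongly convex, so $\hat w_J$ is unique, and its optimality reads $Q_{JJ}(\hat w_J - w^\ast_J) = \frac{\sigma}{n} X_J^\top\varepsilon - \lambda s_J$ for some $s_J \in \partial\Omega_J(\hat w_J)$, i.e.\ $(\Omega_J)^\ast(s_J)\le 1$.

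Next, using Proposition~\ref{prop:decomposability} (which applies because $J$, being the smallest stable set containing $\supp(w^\ast)$, is stable, so $\Omega^J$ is a genuine norm), the extended vector $\hat w$ is a global minimizer of \eq{objective} with $\hat w_{J^c}=0$ as soon as the residual correlations on $J^c$ are dual-feasible for $\Omega^J$, namely $(\Omega^J)^\ast(s_{J^c})\le 1$ with $s_{J^c} = \frac{1}{\lambda n}X_{J^c}^\top(y - X_J\hat w_J)$. Substituting the expression for $\hat w_J - w^\ast_J$ splits $s_{J^c}$ into a deterministic term $Q_{J^cJ}Q_{JJ}^{-1}s_J$ and a Gaussian noise term. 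For the deterministic term, the identity $(Q_{J^cJ}Q_{JJ}^{-1}s_J)_j = \langle Q_{JJ}^{-1}Q_{Jj}, s_J\rangle$ together with $(\Omega_J)^\ast(s_J)\le 1$ gives $|(Q_{J^cJ}Q_{JJ}^{-1}s_J)_j| \le \Omega_J(Q_{JJ}^{-1}Q_{Jj})$; since the dual norm $(\Omega^J)^\ast$ is nondecreasing in absolute values (both norms are of the form $f(|\cdot|)$), the hypothesis $(\Omega^J)^\ast[(\Omega_J(Q_{JJ}^{-1}Q_{Jj}))_{j\in J^c}]\le 1-\eta$ bounds this term by $1-\eta$.

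Then I would control the two stochastic pieces. Writing $z = \frac{1}{\sqrt n}X^\top\tilde\varepsilon$ so that $\frac{\sigma}{n}X^\top\varepsilon$ has the law of $\frac{\sigma}{\sqrt n}z$ with $z\sim N(0,Q)$, the key comparisons are (a) $(\Omega^J)^\ast \le \rho(J)^{-1}(\Omega_{J^c})^\ast \le \rho(J)^{-1}\Omega^\ast$ on $\rb^{J^c}$, coming from $F^J(B) = F(B\cup J)-F(J)\ge \rho(J)F(B)$ and from $\Omega_{J^c}$ being a restriction of $\Omega$ (so $\Omega^\ast$ dominates its dual after zero-extension, by monotonicity in $|\cdot|$); and (b) $\|z_J\|_2 \le c(J)\,(\Omega_J)^\ast(z_J) \le c(J)\,\Omega^\ast(z)$, from the definition of $c(J)$. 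On the event $\{\Omega^\ast(z)\le \tau\}$ with $\tau = \frac{\lambda\eta\rho(J)\sqrt n}{2\sigma}$, (a) forces the noise part of $(\Omega^J)^\ast(s_{J^c})$ below $\eta$, hence $(\Omega^J)^\ast(s_{J^c})\le 1$ (strictly, for uniqueness), certifying $\hat w_{J^c}=0$ and $\supp(\hat w)\subseteq J$; meanwhile the deviation bound $\|\hat w_J - w^\ast_J\|_\infty \le \|\hat w_J - w^\ast_J\|_2 \le \kappa^{-1}(\frac{\sigma}{\sqrt n}\|z_J\|_2 + \lambda c(J))<\nu$, using $\lambda\le \kappa\nu/(2c(J))$ and (b), shows $\hat w_j\ne 0$ for every $j\in\supp(w^\ast)$, i.e.\ $\supp(\hat w)\supseteq\supp(w^\ast)$. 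Relying on the fact (cf.\ \mysec{patterns}) that the support of a solution is a stable set, and that $J$ is the smallest stable set containing $\supp(w^\ast)$, these two inclusions yield $\supp(\hat w)=J$. Finally the requirements reduce to at most three ``bad'' events — the two summands of the $J^c$-noise and the $J$-deviation — each dominated by $\{\Omega^\ast(z)>\tau\}$ after the covariance comparisons, so a union bound gives the stated failure probability $3P(\Omega^\ast(z)>\tau)$.

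I expect the main obstacle to be the bookkeeping of dual norms: getting Proposition~\ref{prop:decomposability} to reduce the global optimality condition to one phrased purely in $(\Omega^J)^\ast$, and then pushing every quantity back to the single global dual norm $\Omega^\ast(z)$ with exactly the constants $\rho(J)$, $\eta$ and $c(J)$ appearing in $\tau$. The comparison $(\Omega^J)^\ast\le\rho(J)^{-1}\Omega^\ast$ and the monotonicity-in-absolute-value of these submodular dual norms are the delicate points; once they are in place the probabilistic step is a routine union bound.
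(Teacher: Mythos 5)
Your proposal is correct and follows essentially the same route as the paper's own proof: the identical primal--dual witness construction (restricted minimizer on $J$, extended by zero), the same $1-\eta$ bound on $(\Omega^J)^\ast\big(Q_{J^cJ}Q_{JJ}^{-1}s_J\big)$ via $\Omega_J$/$\Omega_J^\ast$ duality plus monotonicity of $(\Omega^J)^\ast$ in absolute values, the same comparisons $\Omega_J^\ast \leqslant \Omega^\ast$ and $(\Omega^J)^\ast \leqslant \rho(J)^{-1}\Omega^\ast$ to reduce every stochastic term to the single event $\{\Omega^\ast(z) \leqslant \lambda\eta\rho(J)\sqrt{n}/(2\sigma)\}$, and the same union bound producing the factor $3$. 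The only differences are cosmetic: you bound $\|\hat w_J - w_J^\ast\|_\infty$ through $\ell_2$ using $\|s_J\|_2 \leqslant c(J)$ where the paper pairs coordinates with $\Omega_J(Q_{JJ}^{-1}\delta_j)$, you split the $J^c$ noise into two summands where the paper keeps $q_{J^c} - Q_{J^cJ}Q_{JJ}^{-1}q_J$ as one Gaussian term with dominated covariance, and your closing appeal to Proposition~\ref{prop:patterns} (whose invertibility hypothesis need not hold here) to upgrade $\supp(w^\ast) \subseteq \supp(\hat w) \subseteq J$ to equality is no less rigorous than the paper, which leaves that final step implicit.
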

 \label{sec:highdim}

\begin{proposition}[Consistency]
\label{prop:high-dim}
Assume that $y = Xw^\ast + \sigma \varepsilon$, where $\varepsilon$ is a standard multivariate normal vector. Let $Q = \frac{1}{n} X^\top X \in \rb^{p \times p}$. Denote by $J$ the smallest stable set containing the  support $\supp(w^\ast)$ of $w^\ast$. 
Assume that for all $\Delta$ such that   $\Omega^J(\Delta_{J^c}) \leqslant 3 \Omega_J(\Delta_J)$, $\Delta^\top Q \Delta \geqslant \kappa \|\Delta_J \|_2^2$. Then we have $  \Omega(\hat{w} - w^\ast) \leqslant  \frac{24 c(J)^2 \lambda}{\kappa \rho(J)^2}  \mbox{ and } \frac{1}{n} \| X \hat{w} - X w^\ast\|_2^2 \leqslant 
  \frac{36 c(J)^2 \lambda^2}{\kappa \rho(J)^2}$,
 with probability larger than 
$ 1 - P \big( \Omega^\ast(z) > \frac{ \lambda \rho(J) \sqrt{n}  }{ 2 \sigma  } \big) $
where $z$ is a multivariate normal with covariance matrix $Q$.
   \end{proposition}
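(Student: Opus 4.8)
The plan is to run the standard primal analysis for decomposable regularizers, with every decomposition step supplied by Proposition~\ref{prop:decomposability}. Write $\hat\Delta = \hat w - w^\ast$ and recall $Q = \frac1n X^\top X$. Since $\hat w$ minimizes the objective in \eqref{eq:objective} and $w^\ast$ is feasible, comparing the two values and substituting $y = Xw^\ast + \sigma\varepsilon$ gives the basic inequality $\frac{1}{2n}\|X\hat\Delta\|_2^2 \leqslant \frac{\sigma}{n}\varepsilon^\top X\hat\Delta + \lambda[\Omega(w^\ast) - \Omega(\hat w)]$. First I would bound the stochastic term by H\"older's inequality for the dual pair $(\Omega,\Omega^\ast)$, namely $\frac{\sigma}{n}\varepsilon^\top X\hat\Delta \leqslant \Omega^\ast(\frac{\sigma}{n}X^\top\varepsilon)\,\Omega(\hat\Delta)$. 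Observing that $z := \frac{1}{\sqrt n}X^\top\varepsilon$ has covariance $Q$, this equals $\frac{\sigma}{\sqrt n}\,\Omega^\ast(z)\,\Omega(\hat\Delta)$, so on the event $\{\Omega^\ast(z)\leqslant \frac{\lambda\rho(J)\sqrt n}{2\sigma}\}$---whose complement carries exactly the stated failure probability---the noise contributes at most $\frac{\lambda\rho(J)}{2}\Omega(\hat\Delta)$.

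Next I would convert the penalty difference into the cone condition that activates the restricted-eigenvalue assumption. Since $\supp(w^\ast)\subseteq J$ we have $\Omega(w^\ast) = \Omega_J(w^\ast_J)$ and $\hat w_{J^c} = \hat\Delta_{J^c}$; applying Proposition~\ref{prop:decomposability}(i) to $\hat w$ and the triangle inequality for $\Omega_J$ gives $\Omega(\hat w) \geqslant \Omega_J(w^\ast_J) - \Omega_J(\hat\Delta_J) + \Omega^J(\hat\Delta_{J^c})$, hence $\Omega(w^\ast)-\Omega(\hat w) \leqslant \Omega_J(\hat\Delta_J) - \Omega^J(\hat\Delta_{J^c})$. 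To bound $\Omega(\hat\Delta)$ from above I would combine subadditivity, $\Omega(\hat\Delta)\leqslant \Omega_J(\hat\Delta_J)+\Omega_{J^c}(\hat\Delta_{J^c})$, with the key comparison $\Omega^J \geqslant \rho(J)\,\Omega_{J^c}$ on $\rb^{J^c}$. This last inequality holds because $F^J(B) = F(B\cup J) - F(J) \geqslant \rho(J) F(B) = \rho(J) F_{J^c}(B)$ for every $B\subseteq J^c$ by the very definition of $\rho(J)$, and because the map $F\mapsto f$ is monotone and positively homogeneous through the representation $f(w)=\max_{s\in\mathcal P}w^\top s$ in \eqref{eq:poly}. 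Substituting these into the basic inequality and using $\rho(J)\leqslant 1$ collapses the right-hand side to $\frac{3\lambda}{2}\Omega_J(\hat\Delta_J) - \frac{\lambda}{2}\Omega^J(\hat\Delta_{J^c})$; since the left-hand side is nonnegative, this forces $\Omega^J(\hat\Delta_{J^c}) \leqslant 3\,\Omega_J(\hat\Delta_J)$, so $\hat\Delta$ lies in the cone of the hypothesis.

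Finally I would close the argument quantitatively. On the cone the restricted-eigenvalue assumption gives $\frac1n\|X\hat\Delta\|_2^2 = \hat\Delta^\top Q\hat\Delta \geqslant \kappa\|\hat\Delta_J\|_2^2$, while $\Omega_J(\hat\Delta_J)\leqslant c(J)\|\hat\Delta_J\|_2$ by definition of $c(J)$. Feeding the latter back into the collapsed inequality $\frac{1}{2n}\|X\hat\Delta\|_2^2 \leqslant \frac{3\lambda}{2}\Omega_J(\hat\Delta_J)$ and solving the resulting quadratic in $(\frac1n\|X\hat\Delta\|_2^2)^{1/2}$ produces the prediction-error bound; propagating $\|\hat\Delta_J\|_2$ through the cone condition and the comparison $\Omega(\hat\Delta)\leqslant \Omega_J(\hat\Delta_J) + \frac{1}{\rho(J)}\Omega^J(\hat\Delta_{J^c})$ then yields the $\Omega$-error bound, both of the stated form. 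The main obstacle is not any single estimate but the bookkeeping that keeps the three norms $\Omega$, $\Omega_J$, $\Omega^J$ correctly coupled: the crux is establishing $\Omega^J\geqslant\rho(J)\,\Omega_{J^c}$ from the set-function inequality, since it is this single constant $\rho(J)$ that simultaneously lets the penalty difference be absorbed into the cone condition and governs the return trip from $\Omega^J(\hat\Delta_{J^c})$ back to $\Omega(\hat\Delta)$.
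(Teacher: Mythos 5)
Your proof is correct and follows the same overall strategy as the paper's, which itself follows \cite{bickel_lasso_dantzig}: the same basic inequality, the same key comparison $\Omega^J \geqslant \rho(J)\,\Omega_{J^c}$ (which you, unlike the paper, actually justify from the definition of $\rho(J)$ and the representation \eq{poly}), the same cone condition $\Omega^J(\Delta_{J^c}) \leqslant 3\,\Omega_J(\Delta_J)$ activating the restricted eigenvalue assumption, and the same role for $c(J)$. Two execution details differ. First, the paper handles the noise term by splitting $q^\top \Delta = q_J^\top \Delta_J + q_{J^c}^\top \Delta_{J^c}$ and using the dual consequences $\Omega_J^\ast(q_J) \leqslant \lambda/2$ and $(\Omega^J)^\ast(q_{J^c}) \leqslant \lambda/2$ of the event $\Omega^\ast(q) \leqslant \lambda\rho(J)/2$, whereas you apply a single global H\"older step $q^\top\Delta \leqslant \Omega^\ast(q)\,\Omega(\Delta)$ and then expand $\Omega(\Delta) \leqslant \Omega_J(\Delta_J) + \rho(J)^{-1}\Omega^J(\Delta_{J^c})$; these are equivalent bookkeeping and lead to the same cone condition. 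Second, and more substantively, the paper's closing step invokes the first-order optimality condition of \eq{objective} to bound $\Delta^\top Q \Delta \leqslant \Omega(\Delta)\,\Omega^\ast(Q\Delta) \leqslant \frac{3\lambda}{2}\Omega(\Delta)$ and then eliminates $\Omega(\Delta)$ through $\Omega(\Delta) \leqslant \frac{4}{\rho(J)}\Omega_J(\Delta_J)$, while you stay with the basic inequality and solve the quadratic $\frac{1}{2}u^2 \leqslant \frac{3\lambda c(J)}{2\sqrt{\kappa}}\,u$ in $u = n^{-1/2}\|X\hat{w}-Xw^\ast\|_2$. Your variant is slightly more elementary (no subgradient condition needed) and in fact yields the sharper constants $\frac{1}{n}\|X\hat{w} - Xw^\ast\|_2^2 \leqslant \frac{9\, c(J)^2\lambda^2}{\kappa}$ and $\Omega(\hat{w} - w^\ast) \leqslant \frac{12\, c(J)^2 \lambda}{\kappa\,\rho(J)}$, which imply the stated bounds since $\rho(J) \leqslant 1$.
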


   \begin{proposition}[Concentration inequalities]
  \label{prop:proba}
 Let $z$ be a normal variable with covariance matrix $Q$. Let $\mathcal{T}$ be the set of stable inseparable sets. Then
 $ \textstyle P( \Omega^\ast(z) > t )  
 \leqslant  \sum_{A \in \mathcal{T}} 2^{|A|}  \exp \big( - \frac{   t^2 F(A)^2 /2 }{  1^\top Q_{AA} 1} \big).$
  \end{proposition}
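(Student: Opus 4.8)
The plan is to combine the dual-norm characterization from Proposition~\ref{prop:envelope}(iii) with two nested union bounds and an elementary Gaussian tail estimate. By part~(iii), $\Omega^\ast(z) = \max_{A \in \mathcal{T}} \|z_A\|_1 / F(A)$, so the event $\{\Omega^\ast(z) > t\}$ is exactly $\bigcup_{A \in \mathcal{T}} \{\|z_A\|_1 > t F(A)\}$. A first union bound over the stable inseparable sets gives $P(\Omega^\ast(z)>t) \leqslant \sum_{A\in\mathcal{T}} P(\|z_A\|_1 > t F(A))$, which already produces the outer sum over $\mathcal{T}$ and the factor $F(A)$ inside the exponent.

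Next I would linearize the $\ell_1$-norm through $\|z_A\|_1 = \max_{\epsilon \in \{-1,1\}^A} \epsilon^\top z_A$ and apply a second union bound over the $2^{|A|}$ sign patterns, namely $P(\|z_A\|_1 > t F(A)) \leqslant \sum_{\epsilon\in\{-1,1\}^A} P(\epsilon^\top z_A > t F(A))$; this is how the combinatorial factor $2^{|A|}$ enters. For each fixed $\epsilon$, the scalar $\epsilon^\top z_A$ is a centered Gaussian with variance $\epsilon^\top Q_{AA}\epsilon$, so the standard bound $P(Y > u) \leqslant \exp(-u^2/2\sigma^2)$ for $Y \sim \mathcal{N}(0,\sigma^2)$ yields $P(\epsilon^\top z_A > t F(A)) \leqslant \exp\bigl(-t^2 F(A)^2/(2\,\epsilon^\top Q_{AA}\epsilon)\bigr)$.

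To finish, I would replace the per-sign variance $\epsilon^\top Q_{AA}\epsilon$ by the uniform upper bound $1^\top Q_{AA}1$, so that each of the $2^{|A|}$ terms is at most $\exp\bigl(-t^2 F(A)^2/(2\cdot 1^\top Q_{AA}1)\bigr)$; summing over $\epsilon$ and then over $A \in \mathcal{T}$ gives exactly the claimed inequality $P(\Omega^\ast(z) > t) \leqslant \sum_{A\in\mathcal{T}} 2^{|A|} \exp\bigl(-\tfrac{t^2 F(A)^2/2}{1^\top Q_{AA}1}\bigr)$.

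The only delicate point, and the main obstacle, is the variance bound $\max_{\epsilon\in\{-1,1\}^A}\epsilon^\top Q_{AA}\epsilon \leqslant 1^\top Q_{AA}1$. This is immediate when the all-ones pattern $\epsilon = 1$ attains the maximum, i.e.\ when $Q_{AA}$ has nonnegative entries, but for a general covariance matrix it can fail (a negative off-diagonal lets an alternating sign pattern exceed $1^\top Q_{AA}1$). I would handle this either by exploiting nonnegativity of the relevant correlations in the design, or, more robustly, by upper-bounding each variance by $1^\top |Q_{AA}| 1$ with $|Q_{AA}|$ the entrywise absolute value, for which the all-ones vector is always the maximizer; this is the step that requires care, while every other ingredient is routine.
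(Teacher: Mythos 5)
Your route is the same as the paper's. The paper's proof writes $\Omega^\ast(z)=\max_{\Omega(w)\leqslant 1} w^\top z$, restricts the maximum to the extreme points of the unit ball, which by Proposition~\ref{prop:extremepoints} are the vectors $\frac{1}{F(A)}\epsilon$ with $\epsilon\in\{-1,1\}^A$ and $A\in\mathcal{T}$, and then (implicitly) applies a union bound over these finitely many points together with the scalar Gaussian tail bound. That is exactly your combination of the dual-norm formula $\Omega^\ast(z)=\max_{A\in\mathcal{T}}\|z_A\|_1/F(A)$ from Proposition~\ref{prop:envelope}(iii) with the sign enumeration $\|z_A\|_1=\max_{\epsilon\in\{-1,1\}^A}\epsilon^\top z_A$: the same $2^{|A|}$ points, the same two union bounds, the same bound $P(Y>u)\leqslant \exp(-u^2/2\sigma^2)$.

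The ``delicate point'' you flag is the one step the paper's one-line proof leaves unexamined, and your suspicion is justified: the variance bound $\epsilon^\top Q_{AA}\epsilon\leqslant 1^\top Q_{AA}1$ needed to get the stated denominator holds when $Q_{AA}$ has nonnegative entries but fails in general, and in fact the proposition as written can fail. Take $p=2$, $F(A)=|A|^{1/2}$ (so $\mathcal{T}=\{\{1\},\{2\},\{1,2\}\}$), and $Q$ with unit diagonal and $Q_{12}=-\rho$ with $\rho$ close to $1$. Then $\Omega^\ast(z)\geqslant (|z_1|+|z_2|)/\sqrt{2}\geqslant |z_1-z_2|/\sqrt{2}$, and $z_1-z_2$ has variance $2+2\rho\approx 4$, so the left-hand side is bounded below by a tail of order $e^{-t^2/4}$ (up to polynomial factors); meanwhile $1^\top Q_{\{1,2\}\{1,2\}}1=2-2\rho\to 0$ makes the $A=\{1,2\}$ term of the right-hand side vanish, leaving only $4e^{-t^2/2}$, so for large fixed $t$ and $\rho$ near $1$ the claimed inequality is violated. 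So the statement is correct as written only under a nonnegativity condition on the entries of $Q_{AA}$ (e.g., nonnegatively correlated design columns), and the robust repair is precisely the one you propose: bound each variance by $\max_{\epsilon}\epsilon^\top Q_{AA}\epsilon\leqslant 1^\top|Q_{AA}|1$ with $|Q_{AA}|$ the entrywise absolute value, yielding $P(\Omega^\ast(z)>t)\leqslant \sum_{A\in\mathcal{T}}2^{|A|}\exp\bigl(-t^2F(A)^2/(2\cdot 1^\top|Q_{AA}|1)\bigr)$, which suffices everywhere the proposition is invoked (Propositions~\ref{prop:support} and~\ref{prop:high-dim}). With that substitution your argument is complete; you have not merely reproduced the paper's proof but caught a genuine (if minor) gap that it glosses over.
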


\section{Experiments}
\label{sec:simulations}

We provide illustrations on toy examples of some of the results presented in the paper. We consider the regularized least-squares problem of \eq{objective}, with data generated as follows: given $p,n,k$, the design matrix $X \in \rb^{n \times p}$ is a matrix of i.i.d.~Gaussian components, normalized to have unit $\ell_2$-norm columns. A set $J$ of cardinality $k$ is chosen at random and the weights $w^\ast_J$ are sampled from a standard multivariate Gaussian distribution and $w^\ast_{J^c}=0$. We then take $y = Xw^\ast+ n^{-1/2} \| Xw^\ast\|_2  \  \varepsilon$ where $\varepsilon$ is a standard Gaussian vector (this corresponds to a unit signal-to-noise ratio).

\textbf{Proximal methods vs.~subgradient descent.} \hspace*{.15cm}
For the submodular function $F(A) = |A|^{1/2}$ (a simple submodular function beyond the cardinality) we compare three optimization algorithms described in \mysec{optimization}, subgradient descent and two proximal methods, ISTA and its accelerated version FISTA~\cite{beck2009fast}, for $p=n=1000$, $k=100$ and $\lambda=0.1$. Other settings and other set-functions would lead to similar results than the ones presented in \myfig{runningtimes}: FISTA is faster than ISTA, and much faster than subgradient descent.

\begin{figure}
  \begin{center}

  \hspace*{-.5cm}
\includegraphics[width=4.3cm]{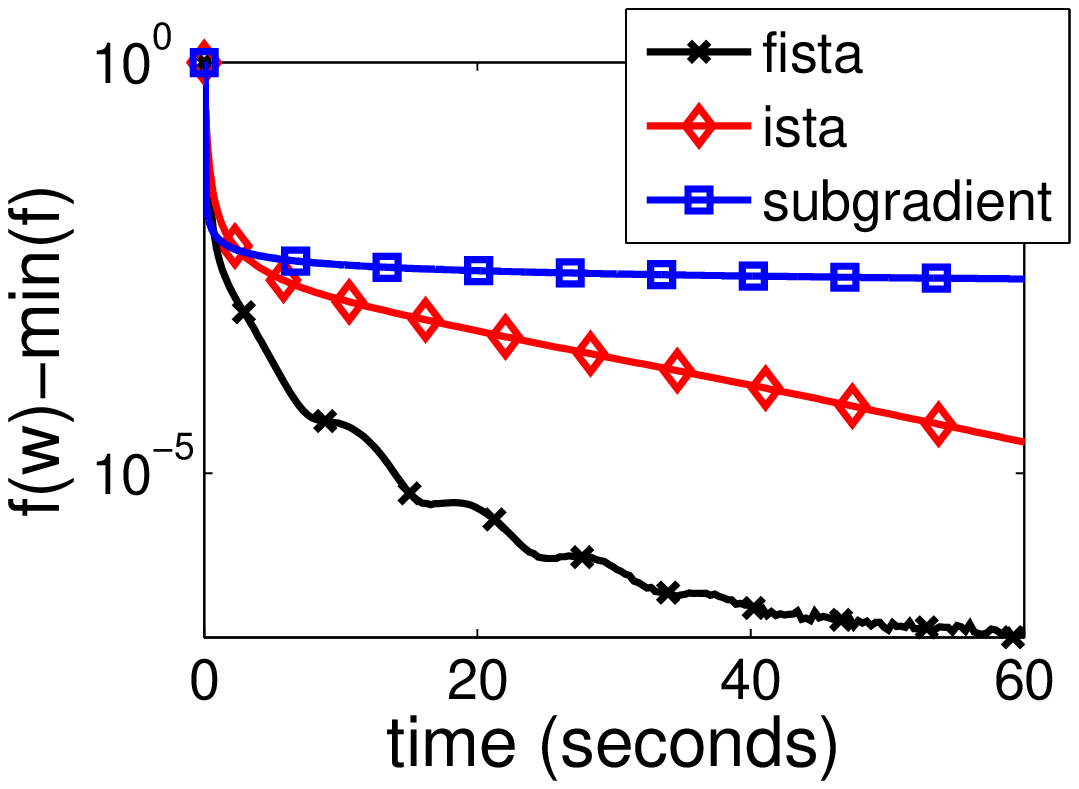} \hspace*{.5cm}
\includegraphics[width=4.3cm]{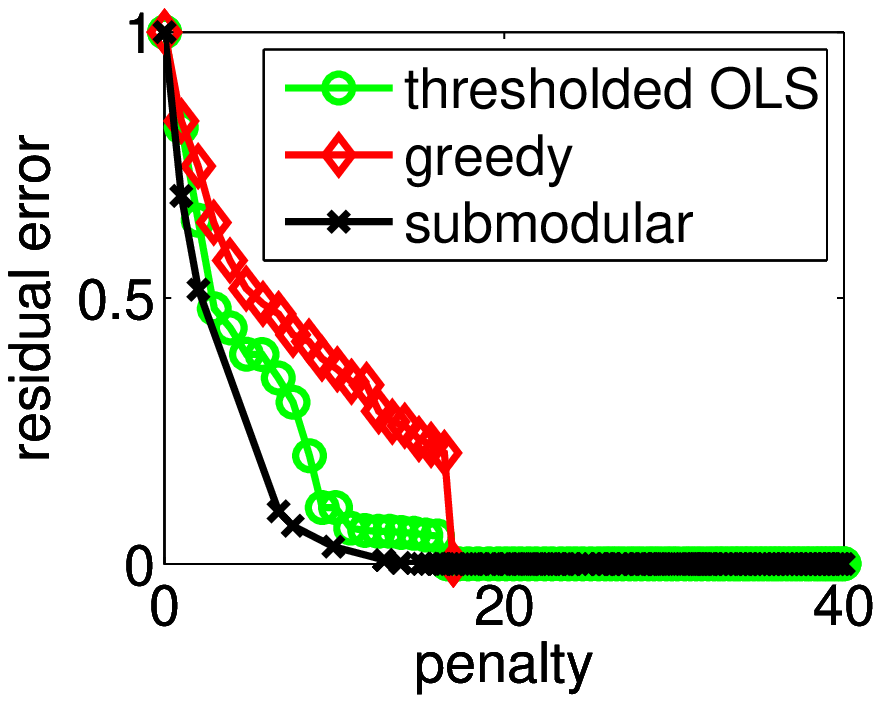} \hspace*{.15cm}
\includegraphics[width=4.3cm]{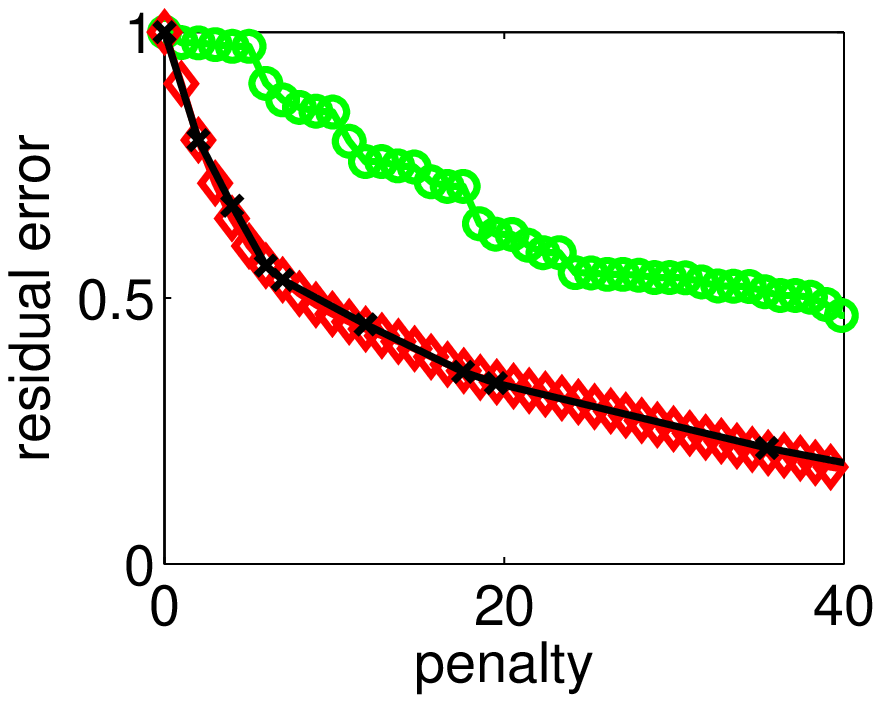}
 \hspace*{-.5cm}
\

  \caption{(Left) Comparison of iterative optimization algorithms (value of objective function vs.~running time). 
(Middle/Right)  Relaxation of combinatorial optimization problem, showing residual error $\frac{1}{n} \| y - X\hat{w} \|_2^2$ vs. penalty $F(\supp(\hat{w}))$: (middle) high-dimensional case ($p=120$, $n=20$, $k=40$), 
(right) lower-dimensional case ($p=120$, $n=120$, $k=40$). }
  \label{fig:runningtimes}
 \end{center}

\end{figure}

\textbf{Relaxation of combinatorial optimization problem.} \hspace*{.15cm}
We compare three strategies for solving the combinatorial optimization problem $\min_{w \in \rb^p} \frac{1}{2n} \|y  - X w\|_2^2 + \lambda F(\supp(w))$ with $F(A) = \tr (X_A^\top X_A)^{1/2}$, the approach based on our sparsity-inducing norms, the  simpler greedy (forward selection) approach proposed in \cite{haupt2006signal,huang2009learning}, and by thresholding the ordinary least-squares estimate. For all methods, we try all possible regularization parameters.
We see in the right plots of \myfig{runningtimes} that for hard cases (middle plot) convex optimization techniques perform better than other approaches, while for easier cases with more observations (right plot), it does as well as greedy approaches.

\textbf{Non factorial priors for variable selection.} \hspace*{.15cm}
We  now focus on the predictive performance and compare our new norm  with $F(A) = \tr (X_A^\top X_A)^{1/2}$, with greedy approaches~\cite{huang2009learning} and to regularization by $\ell_1$ or $\ell_2$ norms. As shown in Table~\ref{tab:perf}, the new norm based on non-factorial priors is more robust than the $\ell_1$-norm to lower number of observations $n$ and to larger cardinality of support $k$.

\begin{table}

\begin{center}
\begin{tabular}{|lll|r|rrr|}
\hline
$p$ & $n$ & $k$ & submodular & $\ell_2$ vs. submod. & $\ell_1$ vs. submod.   & greedy vs. submod.  \\
\hline
  120 & 120 & 80 & 40.8 $\pm$ 0.8 & -2.6  $\pm$ 0.5 & \bf 0.6  $\pm$ 0.0  & \bf 21.8  $\pm$ 0.9  \\ 
120 & 120 & 40 & 35.9 $\pm$ 0.8 & \bf 2.4  $\pm$ 0.4 & \bf 0.3  $\pm$ 0.0  & \bf 15.8  $\pm$ 1.0  \\ 
120 & 120 & 20 & 29.0 $\pm$ 1.0 & \bf 9.4  $\pm$ 0.5 & -0.1  $\pm$ 0.0  & \bf 6.7  $\pm$ 0.9  \\ 
120 & 120 & 10 & 20.4 $\pm$ 1.0 & \bf 17.5  $\pm$ 0.5 & -0.2  $\pm$ 0.0  & -2.8  $\pm$ 0.8  \\ 
120 & 120 & 6 & 15.4 $\pm$ 0.9 & \bf 22.7  $\pm$ 0.5 & -0.2  $\pm$ 0.0  & -5.3  $\pm$ 0.8  \\ 
120 & 120 & 4 & 11.7 $\pm$ 0.9 & \bf 26.3  $\pm$ 0.5 & -0.1  $\pm$ 0.0  & -6.0  $\pm$ 0.8  \\ 
 \hline
 120 & 20 & 80 & 46.8 $\pm$ 2.1 & -0.6  $\pm$ 0.5 & \bf 3.0  $\pm$ 0.9  & \bf 22.9  $\pm$ 2.3  \\ 
120 & 20 & 40 & 47.9 $\pm$ 1.9 & -0.3  $\pm$ 0.5 & \bf 3.5  $\pm$ 0.9  & \bf 23.7  $\pm$ 2.0  \\ 
120 & 20 & 20 & 49.4 $\pm$ 2.0 & 0.4  $\pm$ 0.5 & \bf 2.2  $\pm$ 0.8  & \bf 23.5  $\pm$ 2.1  \\ 
120 & 20 & 10 & 49.2 $\pm$ 2.0 & 0.0  $\pm$ 0.6 & 1.0  $\pm$ 0.8  & \bf 20.3  $\pm$ 2.6  \\ 
120 & 20 & 6 & 43.5 $\pm$ 2.0 & \bf 3.5  $\pm$ 0.8 & \bf 0.9  $\pm$ 0.6  & \bf 24.4  $\pm$ 3.0  \\ 
120 & 20 & 4 & 41.0 $\pm$ 2.1 & \bf 4.8  $\pm$ 0.7 & -1.3  $\pm$ 0.5  & \bf 25.1  $\pm$ 3.5  \\ 
 \hline
\end{tabular}

\caption{Normalized mean-square prediction errors $ \| X \hat{w} - X w^\ast\|_2^2/n$ (multiplied by 100) with optimal regularization parameters (averaged over 50 replications, with standard deviations divided by $\sqrt{50}$). The performance of the submodular method is shown, then differences from all methods to this particular one are computed, and shown in bold when they are significantly greater than zero, as measured by a paired t-test with level 5\% (i.e., when the submodular method is significantly better).}
\label{tab:perf}

\end{center}

\end{table}

\section{Conclusions}

We have presented a family of sparsity-inducing norms dedicated to incorporating prior knowledge or structural constraints on the support of linear predictors. We have provided a set of common algorithms and theoretical results, as well as simulations on synthetic examples illustrating the good behavior of these norms. Several avenues are worth investigating: first, we could follow current practice in sparse methods, e.g., by considering related adapted concave penalties to enhance sparsity-inducing norms, or by extending some of the concepts for norms of matrices, with potential applications in matrix factorization or multi-task learning  (see, e.g.,~\cite{cevher} for application of submodular functions to dictionary learning). Second, links between submodularity and sparsity could be studied further, in particular by considering submodular relaxations of other combinatorial functions, or studying links with other polyhedral norms such as the total variation, which are known to be similarly associated with  symmetric submodular set-functions such as graph cuts~\cite{chambolle2009total}.

\textbf{Acknowledgements.} \hspace*{.15cm}
 This paper was partially supported by   the Agence Nationale de la Recherche (MGA Project) and the European Research Council (SIERRA Project). The author would like to thank Edouard Grave, Rodolphe Jenatton, Armand Joulin, Julien Mairal and Guillaume Obozinski for discussions related to this work.

\newpage

\appendix

\section{Properties of the norm}

\subsection{Proof of Proposition \ref{prop:envelope}}
(i) 
$\Omega$ is positively homogeneous by definition of the \lova extension in \eq{lovasz2}, convex because of the representation in \eq{poly} as the maximum of $s^\top|w|$ for some $s \in \mathcal{P} \subset \rb_+^p$, and it is a norm as soon as $\Omega(w)=0$ implies that $w=0$, which is true since $\Omega(w) \geqslant \min_{k} F(\{k\}) \| w\|_\infty$. 
(ii)
We denote by $g^\ast$ the Fenchel conjugate of $g$ on the domain $\{ w \in \rb^p, \ \| w\|_\infty \leqslant 1\}$, and $g^{\ast \ast}$ its bidual~\cite{boyd}. By definition of the Fenchel conjugate, we have:
\BEAS
 g^\ast(s)  
& = & \max_{ \|w\|_\infty \leqslant 1 } w^\top s - g(w) \\
& = & \max_{\delta \in \{0,1\}^p }\max_{ \| w\|_\infty \leqslant 1 }  ( \delta \circ w ) ^\top s - f(\delta) \\
& = & \max_{\delta \in \{0,1\}^p }     \delta  ^\top |s| - f(\delta) \\
& = & \max_{\delta \in [0,1]^p }     \delta  ^\top |s| - f(\delta) \mbox{ because $F - |s|$ is submodular}. \\
\EEAS
Thus, for all $w$ such that $\| w\|_\infty \leqslant 1$, 
\BEAS
 g^{\ast \ast}(w)  & = & \max_{s \in \rb^p} s^\top w - g^\ast(s)  \\
& = & \max_{s \in \rb^p}  \min_{\delta \in [0,1]^p } \  s^\top w    -  \delta  ^\top |s| + f(\delta) \\
& = &  \min_{\delta \in [0,1]^p } \max_{s \in \rb^p}   \  s^\top w    -  \delta  ^\top |s| + f(\delta) \mbox{ by strong duality and Slater's condition~\cite{boyd}} \\
& = &  \min_{\delta \in [0,1]^p, \delta \geqslant | w|  }   f(\delta)  = f(|w|)  \mbox{ because $F$ is nonincreasing}.
\EEAS
Note that $F$ non-increasing implies that $f$ is non-increasing with respect to all of its components.
(ii)
We have
$ \displaystyle
\Omega(w) = f(|w|) =  \max_{ s \in \mathcal{P} } s^\top |w| = \max_{ |s| \in  \mathcal{P} } s^\top w
= \max_{ \| s_A\|_1 \leqslant F(A), \ A \subset V }  s^\top w
= \max_{ \max_{A \subset V} \frac{ \| s_A \|_1 }{F(A)} \leqslant 1} s^\top w$, which implies the desired
result. Note that the maximization may  indeed  be limited to the stable inseparable sets $A \in \mathcal{T}$.

\subsection{Proof of Proposition~\ref{prop:extremepoints}}
 We have seen in \mysec{review} that for $A \in \mathcal{T}$ (set of stable inseparable sets), then $\{ x(A) = F(A) \}$ is a face of $\mathcal{P}$ (and those sets are the only ones for which this happens). We get to the desired result by considering potential different signs.

\section{Convex optimization results}

We first prove an additional result related to decomposition of subdifferentials.
Note that the exact subdifferential for the non-zero components of $w$ is rather complicated when $w$ has components with equal magnitude. If this is not the case, i.e., $|w_{j_1}| > \cdots > |w_{j_k}| > 0$, where $k = |J|$, then the subdifferential  $\partial \Omega_J(w_J)$ is reduced to a point $s$ such that
$s_{j_k}  = F( \{ j_1,\dots,j_k\} ) - F( \{ j_1,\dots,j_{k-1}\} )$. For more details on the subdifferential for nonzero components, see \cite{fujishige2005submodular}.

\begin{lemma}[Decomposition of subdifferential]
\label{prop:subdifferential}
Let $w \in \rb^p$, with support $J = \supp(w)$ and with $H$ equal to the smallest stable set containing $J$. The subdifferential $\partial \Omega(w)$ at $w$, can  then be decomposed as follows on $\rb^V = \rb^J \times \rb^{ H \backslash J} \times \rb^{H^c}$: $\partial \Omega(w) = \partial \Omega_J(w_J) \times \{ 0\} \times \{ s_{H^c}, \ (\Omega^H)^\ast(s_{H^c}) \leqslant 1 \}.$
\end{lemma}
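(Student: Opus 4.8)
The plan is to use the standard characterization of the subdifferential of a norm: since $\Omega$ is a norm with dual norm $\Omega^\ast$ (Proposition~\ref{prop:envelope}), for any $w$ one has $s \in \partial \Omega(w)$ if and only if $\Omega^\ast(s) \leqslant 1$ and $s^\top w = \Omega(w)$. I would first translate both conditions into submodular language. By Proposition~\ref{prop:envelope}(iii), $\Omega^\ast(s)\leqslant 1$ is equivalent to $|s| \in \mathcal{P}$, i.e.\ $|s|(A)\leqslant F(A)$ for all $A\subset V$. Since $w$ is supported on $J$, the identity $\Omega(w) = \Omega_J(w_J)$ holds (the \lova extension ignores zero components), and from $s^\top w \leqslant |s|^\top|w| \leqslant \max_{t\in\mathcal{P}} t^\top|w| = f(|w|) = \Omega(w)$ the tightness condition $s^\top w = \Omega(w)$ splits into two requirements: (a) $\sign(s_j)=\sign(w_j)$ for every $j\in J$, and (b) $|s|$ is a maximizer of $t\mapsto t^\top|w|$ over $\mathcal{P}$.

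Next I would analyze the maximizers in (b) through the level-set description of the \lova extension. Writing the distinct positive values of $|w|$ as $v_1 > \cdots > v_m$ (with $v_{m+1}=0$) and their level sets $J_1\subset\cdots\subset J_m=J$, Abel summation gives $t^\top|w| = \sum_{i}(v_i-v_{i+1})\, t(J_i)$, so a nonnegative $t\in\mathcal{P}$ maximizes if and only if $t(J_i)=F(J_i)$ for all $i$; in particular every maximizer satisfies $t(J)=F(J)$. The crucial structural fact I would establish is that the smallest stable set $H\supseteq J$ satisfies $F(H)=F(J)$: the saturation $\{k : F(J\cup\{k\})=F(J)\}$ has value $F(J)$ by the diminishing-returns inequality, is stable, and coincides with $H$ (again by submodularity, any stable $S\supseteq J$ must contain it). Given $F(H)=F(J)$, a nonnegative $t\in\mathcal{P}$ with $t(J)=F(J)$ obeys $F(J)=t(J)\leqslant t(H)\leqslant F(H)=F(J)$, forcing $t(H\setminus J)=0$ and hence $t_{H\setminus J}=0$; applied to $t=|s|$ this yields the middle block $s_{H\setminus J}=0$.

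It then remains to identify the constraints on the two surviving blocks. For the forward inclusion I would use $|s|(H)=F(H)$ together with $|s|\in\mathcal{P}$: restricting the inequalities $|s|(A)\leqslant F(A)$ to $A\subset J$ gives $\Omega_J^\ast(s_J)\leqslant1$, which with (a) and $s_J^\top w_J = \Omega_J(w_J)$ yields $s_J\in\partial\Omega_J(w_J)$; while for $A\subset H^c$, combining $|s|(A\cup H)\leqslant F(A\cup H)$ with $|s|(H)=F(H)$ gives $|s_{H^c}|(A)\leqslant F(A\cup H)-F(H) = F^H(A)$, i.e.\ $(\Omega^H)^\ast(s_{H^c})\leqslant1$. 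Here $F^H$ is nondecreasing and submodular and, because $H$ is stable, strictly positive on singletons, so $\Omega^H$ is genuinely a norm (Proposition~\ref{prop:decomposability}(iii)) with the dual norm given by Proposition~\ref{prop:envelope}(iii). This establishes the product structure as a set of necessary conditions.

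For the converse I would take $s$ with $s_J\in\partial\Omega_J(w_J)$, $s_{H\setminus J}=0$, and $(\Omega^H)^\ast(s_{H^c})\leqslant1$ and verify the two defining conditions directly: $s^\top w = s_J^\top w_J = \Omega_J(w_J)=\Omega(w)$ is immediate, and for $|s|\in\mathcal{P}$ I would decompose an arbitrary $A\subset V$ as $A_J\cup A_{H\setminus J}\cup A_{H^c}$, use $s_{H\setminus J}=0$, and bound $|s|(A) \leqslant F(A_J) + \big(F(A_{H^c}\cup H)-F(H)\big)$; the diminishing-returns inequality $F(A_{H^c}\cup H)-F(H)\leqslant F(A_{H^c}\cup A_J)-F(A_J)$ (valid since $A_J\subset H$) collapses this to $F(A_{H^c}\cup A_J)\leqslant F(A)$ by monotonicity. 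I expect this final submodularity manipulation in the converse, together with pinning down the exact closure property $F(H)=F(J)$, to be the main obstacle; the rest is bookkeeping around the norm-subdifferential formula and the sign decomposition.
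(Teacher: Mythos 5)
Your proof is correct, and it takes a genuinely different route from the paper's. The paper argues locally: for $\Delta$ small enough, the entries of $(w+\Delta)_J$ dominate those of $(w+\Delta)_{J^c}$ in absolute value, so Proposition~\ref{prop:decomposability}(ii) gives $\Omega(w+\Delta)=\Omega_J(w_J+\Delta_J)+\Omega^J(\Delta_{J^c})=\Omega_J(w_J+\Delta_J)+\Omega^H(\Delta_{H^c})$, whence the subdifferential is the product $\partial \Omega_J(w_J)\times\{0\}\times\partial\Omega^H(0)$, the last factor being the dual unit ball $\{ s_{H^c},\ (\Omega^H)^\ast(s_{H^c})\leqslant 1\}$. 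You instead work globally with the dual characterization $s\in\partial\Omega(w) \Leftrightarrow \Omega^\ast(s)\leqslant 1$ and $s^\top w = \Omega(w)$, identify the maximizers of $t\mapsto t^\top |w|$ over $\mathcal{P}$ through the level sets of $|w|$, and verify both inclusions by direct submodular manipulations. The main thing your route buys is that it proves explicitly what the paper uses silently: that $H$ is the saturation $\{k:\ F(J\cup\{k\})=F(J)\}$, that $F(H)=F(J)$, and hence that any $|s|\in\mathcal{P}$ tight on $J$ must vanish on $H\setminus J$ --- exactly the facts hidden in the paper's unexplained identification $\Omega^J(\Delta_{J^c})=\Omega^H(\Delta_{H^c})$ (i.e., that $F^J$ vanishes on singletons of $H\setminus J$ and agrees with $F^H$ on subsets of $H^c$). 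Your argument also yields explicit dual certificates and avoids relying on the (true but unstated) locality of subdifferentials, at the price of lengthier polyhedral bookkeeping; the paper's route buys brevity by reusing the decomposition property already established. Your converse inclusion is sound: the key inequality $F(A_{H^c}\cup H)-F(H)\leqslant F(A_{H^c}\cup A_J)-F(A_J)$ is submodularity applied to $X=A_{H^c}\cup A_J$ and $Y=H$. One cosmetic point: your condition (a) should read $s_j w_j = |s_j|\,|w_j|$ for $j\in J$ (allowing $s_j=0$) rather than strict equality of signs, but nothing downstream depends on the stronger form.
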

\begin{proof}
For all sufficient small $\Delta \in \rb^p$, the components in $(w+\Delta)_J$ have all greater absolute values than the ones
in $(w+\Delta)_{J^c}$. Thus, from Proposition~\ref{prop:decomposability}, $\Omega(w+\Delta) = \Omega_J(w_J + \Delta_J) + 
\Omega^J(\Delta_{J^c}) = \Omega_J(w_J + \Delta_J) + 
\Omega^H(\Delta_{H^c}) $, and thus the subdifferential decomposes as 
$\partial \Omega_J(w_J) \times \{ 0\} \times \partial \Omega^H(0)$.
The subdifferential of a norm at zero is exactly the unit ball
of the dual norm, which leads to the desired result.
\end{proof}

\subsection{Proof of Proposition~\ref{prop:proximal}}

 Following~\cite{jenattonmairal}, without loss of generality, we assume that $z$ has nonnegative components. We have by convex duality (which is applicable here because of Slater's condition):
\BEAS
\min_{w \in \rb^p} \frac{1}{2} \| w - z \|_2^2 + \lambda \Omega(w)
& = & \min_{w \in \rb^p}  \max_{\Omega^\ast(s) \leqslant 1}   \frac{1}{2} \| w - z \|_2^2 + \lambda s^\top w \\
& = & \max_{\Omega^\ast(s) \leqslant 1}   \min_{w \in \rb^p}    \frac{1}{2} \| w - z \|_2^2 + \lambda s^\top w \\
& = & \max_{\Omega^\ast(s) \leqslant 1}      \frac{1}{2} \|  z \|_2^2 -  \frac{1}{2} \| \lambda s - z \|_2^2,
\EEAS
where the (unique) optimal $w$ is obtained from the optimal $s$ by $w = z - \lambda s$.
$s$ is defined constrained to satisfy $\Omega^\ast(s) \leqslant 1$, which is equivalent to $|s| \in \mathcal{P}$.
 Since $z$ has nonnegative components, the minimum restricted to $|s| \in \mathcal{P}$ is the same as the minimum restricted to $s \in \mathcal{P}$, and also the same as the one restricted to the submodular polyhedron without constraints on positivity, i.e., our problem reduces to
$\min_{ \forall A \subset V, s(A) \subset F(A) }      \|   s - z/\lambda \|_2^2$, which is also equivalent to
\BEQ
\label{eq:AA}
\min_{ \forall A \subset V, t(A) \subset F(A) - \lambda^{-1} z(A)  }      \|   t \|_2^2.
\EEQ
Up to the constraints $s(V) =   F(V) - \lambda^{-1} z(V) $, this is the minimum-norm point problem for the submodular function $G:A \mapsto F(A) - \lambda^{-1} z(A)$. We can then follow two approaches: the first one is to apply directly the minimum-norm point algorithm to the problem in \eq{AA}, which we have followed in simulations. The second approach is to consider the regular minimum point algorithm; we can then follow \cite[Lemma 7.4]{fujishige2005submodular}: if $t$ is the minimum-norm solution for the submodular function $G$, then we can obtain $s$ as $\lambda^{-1} z$ plus the negative part of $t$. From $s$ we then get $w$ through $w = z - \lambda s$.

 If another algorithm is used for submodular function minimization, then, following \cite[Lemma 7.4]{fujishige2005submodular}, we know which components of the (unique) optimal value $t^\ast$ are negative and which of them are equal to zero (which corresponds to zero components of $w^\ast$). Then, following~\cite{chambolle2009total}, if we add a constant vector with components equal to $\alpha$ to $z$, we may obtain level sets of $w^\ast$. With several values of $\alpha$, we can then obtain the full solution $w^\ast$. However, the minimum norm point algorithm remains the most efficient and allows to obtain directly the solution of the proximal problem.

\section{Sparse estimation}
In this section, we consider a design $X \in \rb^{n \times p}$  be a fixed design and $y \in \rb^n$ a set of random responses. Given $\lambda >0$, we define
$\hat{w}$ as a minimizer of the regularized least-squares cost:
\BEQ
\label{eq:objective}
\min_{w \in \rb^p} \frac{1}{2n} \| y - X w\|_2^2 + \lambda \Omega(w).
\EEQ

\subsection{Proof of Proposition~\ref{prop:decomposability}}
 
(i) for $s \in \rb_+^p$, if $\forall B \subset J$, $s(B) \leqslant F(B)$ and $\forall C \subset J^c$, 
$s(C) \leqslant F(C \cap J) - F(J)$, then $\forall A \subset V$,
$s(A) = s(A\cap J) + s(A\cap J^c)
\leqslant F(A\cap J) + F(A \cup J) - F(J) \leqslant F(A)$ by submodularity. This implies 
that the desired result by considering the representation of the Lov\'asz extension in \eq{lovasz2} and the fact that we have just prove that $\mathcal{P}$ contains the product of the two submodular polyhedra associated to $F^J$ and $F_J$.

(ii) This is immediate from the expression of the Lov\'asz extension in \eq{lovasz2}. Indeed, the order within $J$ and the one within $J^c$ do not interact. Note that this case includes cases where we some of the components of $|w_J|$ are equal to some of $|w_{J^c}|$.

(iii) $\Omega^J$ corresponds to the submodular function obtained as the contraction of $F$ by $J$. It is thus a norm as soon as $F^J$ is positive on all singletons, which is itself equivalent to the stability of $J$. The equivalence of being a norm with stability of the set $J$ is then straightforward.

\subsection{Proof of Proposition~\ref{prop:patterns}}

 Let $Q = \frac{1}{n} X^\top X \in \rb^{p \times p}$ and $r = \frac{1}{n} X^\top y \in \rb^p$.
The unicity of the minimizer $\hat{w}$ is a consequence of the invertibility of $Q = \frac{1}{n} X^\top X$. Let $J \subset V$. We will show that if $\supp(\hat{w}) = J$, then $\hat{w}_J$ is an affine function of $r$ (and hence $y$), of the form $\hat{w}_J = (Q_{JJ}^{-1} - A_{JJ} ) r_J + b_J$, where $ 0 \preccurlyeq A_{JJ} \preccurlyeq Q_{JJ}^{-1}$ and $(A_{JJ},b_J)$ belongs to a finite set independent of $r$. 

If $J$ is not a stable set, then, by Proposition~\ref{prop:subdifferential}, this will implies that there exists $j \in J^c$ such that
$ Q_{j J}  [ (Q_{JJ}^{-1} - A_{JJ} ) r_J + b_J ] - r_{j} ) =0$, i.e., 
$$ 0 = Q_{j J}  [ (Q_{JJ}^{-1} - A_{JJ} ) r_J + b_J ] - r_{j} = \frac{1}{n}[ Q_{j J}  Q_{JJ}^{-1} X_J^\top  - X_{j}^\top  -  Q_{j J}   A_{JJ} X_J^\top ]  y  
+ Q_{j J}  b_J.
$$
The row vector $Q_{j J}  Q_{JJ}^{-1} X_J^\top  - X_{j}^\top  -  Q_{j J}   A_{JJ} X_J^\top$ cannot be equal to zero, otherwise,
$$
0 = \frac{1}{n} [ Q_{j J}  Q_{JJ}^{-1} X_J^\top  - X_{J^c}^\top  -  Q_{j J}   A_{JJ} X_J^\top ] X_j
= Q_{j J}  Q_{JJ}^{-1}Q_{Ji}   - Q_{jj}   -  Q_{j J}   A_{JJ} Q_{Jj}  \leqslant  Q_{j J}  Q_{JJ}^{-1}Q_{Ji}   - Q_{jj} 
$$
which is a contradiction because of the invertibility of $Q$ and the Schur complement lemma~\cite{golub83matrix} (which implies that the previous quantity must be strictly negative). Thus, we have shown that if
$\supp(\hat{w}) = J$ and $J$ is not a stable subset, then for a finite number of non zero $(c,d) \in \rb^n \times \rb$, then $c^\top y$ is constant. This occurs with probability zero.

What remains to be shown is the affine representation of $\hat{w}_J$ when the support is given; it is essentially equivalent to showing that the path is piecewise affine, which is not surprising for a polyhedral norm~\cite{rosset}. We use the representation $\Omega_J(w_J) = \max_{z \in B} z^\top w_J$ where $B$ is the finite set of $z$ such that $|z|$ in an extreme point of the submodular polyhedron associated with $\Omega_J$.

\emph{Necessary optimality conditions}~\cite{borwein2006caa} for such the problem in \eq{objective} is the existence of $\eta_z \geqslant 0 $ (for each $z \in B$) such that (1) $\sum_{z \in B }\eta_z = 1$,   (2) $\eta_z = 0$ if $z$ is not a maximizer
of $\max_{z \in B} z^\top w_J$, and (3) $w_J$ is a minimizer of
$  \frac{1}{2}w_J^\top Q_{JJ} w_J - r_J^\top w_J +  \lambda w^\top \sum_{z \in A } \eta_z z  $, i.e.,
$ Q_{JJ} w_J  +  \lambda   \sum_{z \in A } \eta_z z =r_J
$.  Moreover, by Carath\'eodory's theorem~\cite{borwein2006caa}, the number $k$ of non-zero $\eta$ may be taken to be less than $|J|+1$.

This thus implies that, if consider the vector $\zeta \in \rb^{ k}$ of non-zero $\eta$, and the matrix $Z \in \rb^{|J| \times k}$ of corresponding $z$'s, then we have
$$ Q_{JJ} w_J  +  \lambda  Z \zeta = r_J$$
$$ \zeta^\top 1 = 1 $$
$$ \exists c \in \rb \mbox{ such that }  Z^\top w_J = c 1.$$
In matrix form, this can be written as:
$$
\left( \begin{array}{ccc}
Q_{JJ} & \lambda Z & 0 \\
\lambda Z^\top & 0 & - \lambda 1 \\
0 & - \lambda 1^\top & 0 
\end{array} \right)
\left( \begin{array}{c}
w_J \\ \zeta \\ c
\end{array} \right)
=\left( \begin{array}{c}
 r_J \\ 0 \\ -\lambda
\end{array} \right).
$$
It is then a simple linear algebra exercise to show that if $k \leqslant |J|+1$, then $w_J$ is of the desired form.

\subsection{Proof of Proposition~\ref{prop:support}}

  Let $q = \frac{1}{n} X^\top \varepsilon \in \rb^p$, which is normal with mean zero and covariance matrix $\sigma^2 Q / n$.
We have $\Omega(x) \geqslant \Omega_J(x_J) +  \Omega^{J}(x_{J^c})
\geqslant \Omega_J(x_J) +  \rho(J) \Omega_{J^c}(x_{J^c}) \geqslant \rho(J) \Omega(x)$. This implies that
$\Omega^\ast(q) \geqslant \rho(J)^{-1} \max \{ \Omega_J^\ast(q_J) , (\Omega^J)^\ast(q_{J^c}) \}$. Moreover,
$q_{J^c} - Q_{J^c J}Q_{JJ}^{-1} q_J$ is normal with covariance matrix $\sigma^2 / n ( Q_{J^c J^c} - Q_{J^c J} Q_{JJ}^{-1} Q_{J J^c} ) \preccurlyeq \sigma^2 / n  Q_{J^c J^c} $. This implies that with probability larger than $1 - 3 P( \Omega^\ast(q) > \lambda  \rho(J) \eta/2 )$,
we have  $\Omega_J^\ast(q_J) \leqslant \lambda/2$ and 
 $(\Omega^J)^\ast( q_{J^c} - Q_{J^c J}Q_{JJ}^{-1} q_J ) \leqslant \lambda  \eta/2 $.
 
 We denote by $\tilde{w}$ the unique (because $Q_{JJ}$ is invertible) minimum of  $\frac{1}{2n} \| y - X w\|_2^2 + \lambda \Omega(w)$, subject to $w_{J^c}=0$. $\tilde{w}_J$ is defined through $Q_{JJ} ( \tilde{w}_J - {w_J}^\ast ) - q_J = - \lambda s_J$ where $s_J \in \partial \Omega_J(\tilde{w}_J)$ (which implies that $\Omega_J^\ast(s_J) \leqslant 1$) , i.e., $\tilde{w}_J - w^\ast_J = Q_{JJ}^{-1} ( q_J - \lambda s_J)$.  We have:
 \BEAS
 \| \tilde{w}_J - w^\ast_J  \|_\infty  & \leqslant &  
 \max_{ j \in J } | \delta_j^\top  Q_{JJ}^{-1} ( q_J - \lambda s_J) |
 \\
  & \leqslant &  
 \max_{ j \in J } \Omega_J(   Q_{JJ}^{-1}  \delta_j ) \Omega_J^\ast( q_J - \lambda s_J) |
 \\ 
  & \leqslant &  
 \max_{ j \in J } c(J)  \|   Q_{JJ}^{-1}  \delta_j  \|_2 [ \Omega_J^\ast( q_J) +  \lambda   \Omega_J^\ast( s_J)  ]
\leqslant 
\frac{3}{2} \lambda c(J)   \kappa^{-1} 
 \\ 
 \EEAS
 Thus if $2 \lambda c(J)   \kappa^{-1}  \leqslant \nu$, then $\supp(\tilde{w}) \supset \supp(w^\ast)$.
 
 We now show that since we have $(\Omega^J)^\ast( q_{J^c} - Q_{J^c J}Q_{JJ}^{-1} q_J ) \leqslant \lambda  \eta/2 $, $\tilde{w}$ is the unique minimizer of \eq{objective}. For that it suffices to show that $(\Omega^J)^\ast ( Q_{J^c J} (\tilde{w}_J - w_J^\ast) - q_{J^c} ) < \lambda$. We have:
 \BEAS
 (\Omega^J)^\ast ( Q_{J^c J} (\tilde{w}_J - w_J^\ast) - q_{J^c} )
 & = &  (\Omega^J)^\ast ( Q_{J^c J} Q_{JJ}^{-1} ( q_J - \lambda s_J)- q_{J^c} )
\\
 & \leqslant &  (\Omega^J)^\ast ( Q_{J^c J} Q_{JJ}^{-1}   q_J  - q_{J^c} ) + 
\lambda  (\Omega^J)^\ast (  Q_{J^c J} Q_{JJ}^{-1}  s_J )
\\
 & \leqslant &  (\Omega^J)^\ast ( Q_{J^c J} Q_{JJ}^{-1}   q_J  - q_{J^c} ) + 
\lambda (\Omega^J)^\ast[  ( \Omega_J(   Q_{JJ}^{-1} Q_{Jj} ) )_{j \in J^c} ]
\\
& \leqslant &  \lambda  \eta/2 + \lambda ( 1-\eta) < \lambda
 \EEAS
 which leads to the desired result.

\subsection{Proof of Proposition~\ref{prop:high-dim}}

   Like for the proof of Proposition~\ref{prop:support}, we have $\Omega(x) \geqslant \Omega_J(x_J) +  \Omega^{J}(x_{J^c})
\geqslant \Omega_J(x_J) +  \rho(J) \Omega_{J^c}(x_{J^c}) \geqslant \rho(J) \Omega(x)$. Thus, if we assume
$\Omega^\ast(q) \leqslant  \lambda  \rho(J) /2$, then  $\Omega_J^\ast(q_J) \leqslant \lambda/2$ and
 $(\Omega^J)^\ast( q_{J^c}) \leqslant \lambda /2 $. Let $\Delta  = \hat{w} - w^\ast$.

We follow the proof from~\cite{bickel_lasso_dantzig} by using the decomposition property of the norm $\Omega$.
We have, by optimality of $\hat{w}$:
$$ \frac{1}{2}\Delta^\top Q \Delta+   \lambda \Omega( w^\ast + \Delta) + q^\top \Delta \leqslant \lambda \Omega( w^\ast + \Delta) + q^\top \Delta  \leqslant    \lambda \Omega( w^\ast)
$$
Using the decomposition property,
$$  \lambda \Omega_J( (w^\ast + \Delta)_J ) + \lambda \Omega^J((w^\ast + \Delta)_{J^c} ) + q_J^\top \Delta_J
+ q_{J^c}^\top \Delta_{J^c}
  \leqslant    \lambda \Omega_J( w^\ast_J)
$$
$$   \lambda \Omega^J( \Delta_{J^c} )  \leqslant    \lambda \Omega_J( w^\ast_J) - \lambda \Omega_J( w^\ast_J + \Delta_J ) 
+ \Omega_J^\ast(q_J) \Omega_J(\Delta_J)
+ 
(\Omega^J)^\ast (q_{J^c}) \Omega^J(  \Delta_{J^c})
$$
$$   ( \lambda - (\Omega^J)^\ast (q_{J^c}) ) \Omega^J( \Delta_{J^c} )  \leqslant    ( \lambda +  \Omega_J^\ast(q_J)   ) \Omega_J( \Delta_J )$$
Thus $\Omega^J( \Delta_{J^c} )  \leqslant  3 \Omega_J( \Delta_J)$, which implies $
\Delta^\top Q \Delta \geqslant \kappa \|\Delta_J \|_2^2
$ (we have assumed a restricted eigenvalue condition).
Moreover, we have:
\BEAS
\Delta^\top Q \Delta & = & \Delta^\top ( Q \Delta) \leqslant \Omega(\Delta) \Omega^\ast( Q \Delta) \\
& \leqslant &  \Omega(\Delta) ( \Omega^\ast( q)  +  \lambda ) \leqslant \frac{3 \lambda}{2} \Omega(\Delta) 
\mbox{ by optimality of } \hat{w}
\\
\Omega(\Delta)
& \leqslant & 
   \Omega_J(\Delta_J) + \rho(J)^{-1} 
\Omega^J(\Delta_{J^c})  \\
& \leqslant &  \Omega_J(\Delta_J) ( 3 + \frac{1}{\rho(J)} )
\leqslant  \frac{4}{\rho(J)} \Omega_J(\Delta_J)
\EEAS
This implies that $\frac{ \kappa}{c(J)^2} \Omega_J(\Delta_J)^2 \leqslant \kappa \| \Delta_J\|_2^2 \leqslant \Delta^\top Q \Delta \leqslant 
\frac{6 \lambda}{\rho(J)} \Omega_J(\Delta_J)$, and thus 
$
\Omega_J(\Delta_J) \leqslant \frac{6 c(J)^2 \lambda}{\kappa \rho(J)}
$, which leads to the desired result, given the previous inequalities.

\subsection{Proof of Proposition~\ref{prop:proba}}
 
 We have $\Omega^\ast(z) = \max_{ \Omega(w) \leqslant 1 } w^\top z$; the maximum can be taken over the set of extreme points of the unit ball, which leads to the desired result given Proposition~\ref{prop:extremepoints}.

\bibliographystyle{unsrt}
\bibliography{submodular}

\end{document}